\theoremstyle{plain}
\newtheorem{theorem}{Theorem}[section]
\newtheorem{proposition}[theorem]{Proposition}
\theoremstyle{definition}
\theoremstyle{remark}
\newcommand{\me}{\mathrm{e}}
\icmltitlerunning{µnit Scaling: Simple and Scalable FP8 LLM Training}
\begin{document}

\twocolumn[
\icmltitle{µnit Scaling: Simple and Scalable FP8 LLM Training}

% It is OKAY to include author information, even for blind
% submissions: the style file will automatically remove it for you
% unless you've provided the [accepted] option to the icml2025
% package.

% List of affiliations: The first argument should be a (short)
% identifier you will use later to specify author affiliations
% Academic affiliations should list Department, University, City, Region, Country
% Industry affiliations should list Company, City, Region, Country

% You can specify symbols, otherwise they are numbered in order.
% Ideally, you should not use this facility. Affiliations will be numbered
% in order of appearance and this is the preferred way.
\icmlsetsymbol{equal}{*}

\begin{icmlauthorlist}
\icmlauthor{Saaketh Narayan}{databricks-prev}
\icmlauthor{Abhay Gupta}{databricks}
\icmlauthor{Mansheej Paul}{databricks-prev}
\icmlauthor{Davis Blalock}{databricks}

%\icmlauthor{}{sch}
%\icmlauthor{}{sch}
\end{icmlauthorlist}

\icmlaffiliation{databricks}{Databricks Mosaic Research, San Francisco, CA}
\icmlaffiliation{databricks-prev}{Work done while at Databricks Mosaic Research}

\icmlcorrespondingauthor{Saaketh Narayan}{narayan.saaketh@gmail.com}
\icmlcorrespondingauthor{Davis Blalock}{davis.blalock@databricks.com}

% You may provide any keywords that you
% find helpful for describing your paper; these are used to populate
% the "keywords" metadata in the PDF but will not be shown in the document
\icmlkeywords{LLM, FP8, Transformer, Model Training, Attention}

\vskip 0.3in
]

% this must go after the closing bracket ] following \twocolumn[ ...

% This command actually creates the footnote in the first column
% listing the affiliations and the copyright notice.
% The command takes one argument, which is text to display at the start of the footnote.
% The \icmlEqualContribution command is standard text for equal contribution.
% Remove it (just {}) if you do not need this facility.

\printAffiliationsAndNotice{}  % leave blank if no need to mention equal contribution
% \printAffiliationsAndNotice{\icmlEqualContribution} % otherwise use the standard text.

\begin{abstract}
Large language model training with 8-bit floating point (FP8) formats promises significant efficiency improvements, but reduced numerical precision makes training challenging. It is currently possible to train in FP8 only if one is willing to tune various hyperparameters, reduce model scale, or accept the overhead of computing dynamic scale factors. We demonstrate simple, scalable FP8 training that requires no dynamic scaling factors or special hyperparameters, even at large model sizes. Our method, \textit{µnit Scaling (µS)}, also enables simple hyperparameter transfer across model widths, matched numerics across training and inference, and other desirable properties. µnit Scaling is straightforward to implement, consisting of a set of minimal interventions based on a first-principles analysis of transformer operations. We validate our method by training models with parameters ranging from 1B to 13B, performing all hidden linear layer computations in FP8. We achieve quality equal to higher-precision baselines while also training up to 33\% faster.
\end{abstract}

\section{Introduction}

Because LLM training is computationally expensive, low-precision training provides large compute savings. Modern LLMs are typically trained in mixed-precision bfloat16 (BF16), where most computation occurs in BF16, but some components requiring higher precision (such as accumulators and master weights) use FP32 \cite{mixedPrecision}. Thanks to increased hardware support for FP8 formats, mixed precision training using FP8 computation promises even greater training efficiency \cite{fp8Formats}. However, the reduced range and resolution of FP8 make LLM training challenging. In this work, we demonstrate a simple, scalable FP8 training method with straightforward hyperparameter transfer on large LLMs, called ``µnit Scaling" (µS).

Our µnit Scaling method builds on Unit Scaling \cite{unitScaling}, which aims to maintain unit variance in weights, activations, and gradients. To ensure this, it scales neural network operations with static constants and initializes network parameters to have unit variance. If all tensors used in training can maintain unit variance, they are representable with sufficient range and resolution by low-precision formats like FP16 and FP8. However, preserving high-quality tensor representations in low-precision formats is challenging for large models.

Besides faster training, several other properties are desirable in a low-precision training scheme. Examples include minimizing extra hyperparameters, avoiding dynamic scale factor overhead, and allowing optimal hyperparameters from small models to transfer to large models. As summarized in Fig.~\ref{methods-comparison-table}, µS is the only method that provides these benefits. We elaborate on each of these properties below.

\begin{figure*}[h]
\centering
\includegraphics[width=1.0\textwidth]{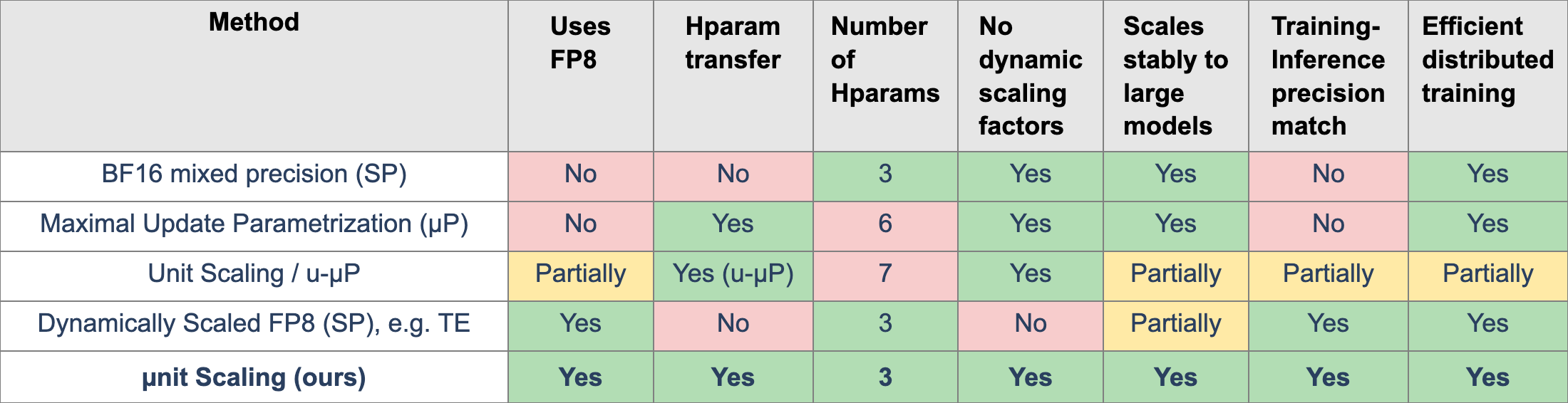}
\caption{\textbf{Comparison of low-precision training methods.} Our proposed method, µnit Scaling (µS, bottom row), enables FP8 training and hyperparameter transfer at scale. Unlike existing methods, it does not use dynamic scaling, requires only a small set of hyperparameters, permits FP8 computation for all hidden layers, and makes the model more easily quantizable for inference.}
\label{methods-comparison-table}
\end{figure*}

\textbf{Straightforward hyperparameter transfer}\quad Tuning hyperparameters for large LLMs is expensive. A promising way to reduce this cost is to tune the hyperparameters for smaller LLMs and ``transfer'' them to large ones, either by using them directly or by applying a model-size-based formula as explored in µ-Parametrization (µP)\cite{tensorProgramsV, spectralConditionFeatureLearning, tensorProgramsVI}. However, applying hyperparameter transfer techniques in practice to low-precision training can be challenging; frequent divergences due to numerical issues may require training in higher precisions like FP32 \cite{tensorProgramsV}. To address this, \citet{uMuP} introduced u-µP, which combines Unit Scaling \cite{unitScaling} and µP to enable hyperparameter transfer in low precision. Unfortunately, compared to conventional BF16 mixed precision training (henceforth termed ``standard parametrized'' (SP) models), both µP and u-µP have many more hyperparameters to sweep over (see  Table~\ref{hparams_table}), diminishing realized compute savings and increasing complexity. Specific implementation intricacies, such as zero-initialized queries in µP or LR scaling for embeddings by fan-out in u-µP, make these schemes harder to use in practice than SP. In contrast, our µnit Scaling (µS) scheme combines µP and Unit Scaling in a greatly simplified way, making it easier to use and more cost-effective. We demonstrate hyperparameter transfer of learning rate ($\eta$) and weight decay ($\lambda$) to models of up to 20x larger widths.

\textbf{No Dynamic Scaling}\quad With dynamic scaling, one calculates per-tensor scaling factors for each weight, activation, and gradient tensor in training. These scales shift BF16 tensors into the representable ranges of FP8 formats in each forward and backward pass. Typically, one also decouples the forward and backward formats, using e4m3 for weights and activations and e5m2 for gradients \cite{hybridFp8}. NVIDIA's TransformerEngine is a notable example of an FP8 training library that uses dynamic scaling \cite{transformerEngine}. Calculating scaling factors dynamically adds training and inference overhead and complicates large-scale distributed training and checkpointing.

\textbf{Apply to All Linear Layers}\quad Existing work on applying Unit Scaling at larger scales requires certain ``critical matmuls'' (attention out projection, FFN down projection) to stay in BF16 \cite{uMuP}. Assuming a transformer model with conventional multiheaded attention and an MLP with an expansion ratio of 4, this means 41.7\% of all hidden linear layer FLOPs are not in FP8. In contrast, µS ensures that, regardless of scale, \textit{all} hidden layers use FP8.

\textbf{Match Inference-Time Quantization}\quad For efficient inference, LLMs are often quantized to FP8 or INT8 for faster computation and reduced memory footprints \cite{fbgemm, llmInt8}. Since training typically occurs in higher bitwidths (e.g., BF16), a mismatch in precisions at training time and inference time means that some level of quantization error is unavoidable, degrading model quality. Training with µS avoids this mismatch---since the LLM has already been trained in FP8, it is immediately ready for inference in FP8 for both weights and activations (W8A8).

\subsection{Contributions}

Our work makes the following contributions:
\begin{itemize}[noitemsep,nolistsep,topsep=0pt,partopsep=0pt]
    \item Identifying root causes for poor numerics in conventional transformer blocks---for example, explaining diminishing variance in self-attention outputs with increasing sequence position.
    \item Introducing a simple method for fixing these issues that enables FP8 training in all hidden linear layers and with less overhead than existing methods. It also achieves desirable properties such as improved training efficiency and matched numerics at training and inference time.
\end{itemize}

\section{Methods}

In this section, we detail the components of our proposed method, µnit Scaling (µS). The modifications to the standard transformer training scheme that µS requires are summarized in Table~\ref{modifications_table}. We elaborate on novel components such as our handling of self-attention numerics, residual modifications, and hyperparameter transfer below.

\begin{table*}[t]
\caption{\textbf{Components of the µS training scheme.} µS makes the following modifications to standard decoder-only transformer training practices. A deeper explanation of these modifications is provided in Appendix \ref{subsec:why-modifications}.}
\label{modifications_table}
\begin{center}
\begin{tabular}{l|c}
\hline
\textbf{Modification} & \textbf{Description} \\ \hline
Linear layer scaling factors & \makecell{$\frac{1}{\sqrt{\text{fan\_in}}}$ static scaling factor applied in \textit{both} forward and backward pass. \\ The final LM head uses a multiplier of $\frac{1}{\text{fan\_in}}$ instead, in line with µP. \vspace{0.5mm}} \\ \hline
Res-Post-LayerNorm & \makecell{LayerNorm is the last operation in each residual branch instead of the first. \vspace{0.5mm}} \\ \hline
``Fixed'' residual modification & \makecell{Use a fixed constant $\tau$ to make residuals variance-preserving, according to Eq.~\ref{eq:res-mod-fixed}. \vspace{0.5mm}} \\ \hline
Unit variance initialization & \makecell{All linear layer weights initialized with variance 1. \vspace{0.5mm}} \\ \hline
FP8 hidden layers & \makecell{Use FP8E4M3 for weights and activations, FP8E5M2 for gradients. Before casting, \\ clip BF16 values to FP8 dtype max. Keep embedding table and LM head in BF16.  \vspace{0.5mm}} \\ \hline
Learning rate ($\eta$) scaling & \makecell{Optimal $\eta$ stays constant for input and output layers, but is scaled by $\frac{\sqrt{d_\text{base}}}{\sqrt{d_\text{model}}}$ for all \\ hidden layers, when transferring from a base model with width $d_\text{base}$ \vspace{0.5mm}} \\ \hline
Weight decay ($\lambda$) scaling & \makecell{With fully decoupled weight decay, optimal $\lambda$ stays constant for all layers with \\ increasing width. \vspace{0.5mm}} \\ \hline
\end{tabular}
\end{center}
\end{table*}

\subsection{Self-attention Numerics}\label{subsec:self-attn-numerics}

The causal self-attention mechanism at the core of decoder layers in LLMs is not variance-preserving, making low-precision training challenging.

Recall that standard self-attention is defined as:
\begin{equation}
   \text{Attention}(\mathbf{Q}, \mathbf{K}, \mathbf{V}) = \text{softmax}\left(\frac{\mathbf{Q}\mathbf{K}^T}{\sqrt{d}}\right)\mathbf{V} \label{eq:standard_attn}
\end{equation}
\begin{proposition}\label{proof:attn-output-variance}
Suppose we have $\mathbf{x} \in \mathbb{R}^{k}$ and $\mathbf{V} \in \mathbb{R}^{k \times m}$. Define $\mathbf{s} \triangleq 
\mathrm{softmax}(\mathbf{x})$, $\mathbf{a} \triangleq \mathbf{s}^T\mathbf{V}$, and $\sigma^2_{\mathbf{a}} \triangleq \mathrm{Var}[\mathbf{a}]$. Assume that each element $x_i \overset{\text{iid}}{\sim} \mathcal{N}(0, 1)$, and that entries $V_{ij}$ are independent and distributed with $\mu_{\mathbf{V}} \triangleq E[{\mathbf{V}}] = 0, \sigma^2_{\mathbf{V}} \triangleq \mathrm{Var}[{\mathbf{V}}] = 1$. Then, up to a first-order Taylor approximation, $\sigma^2_{\mathbf{a}} \propto \frac{1}{k}$ for $k \gg 1$.
\end{proposition}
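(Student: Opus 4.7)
The plan is to apply the law of total variance to decompose $\sigma_{\mathbf{a}}^2$ into a computable expression involving only the softmax weights $\mathbf{s}$, then use symmetry and a Taylor expansion around the concentration point of the partition function to extract the leading-order dependence on $k$.

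First I would fix an output coordinate $j$ and write $a_j = \sum_{i=1}^{k} s_i V_{ij}$. Conditioning on $\mathbf{s}$ and using the independence and moment assumptions on the entries of $\mathbf{V}$, we get $E[a_j \mid \mathbf{s}] = 0$ and $\mathrm{Var}[a_j \mid \mathbf{s}] = \sum_i s_i^2$. The law of total variance then collapses to
\begin{equation*}
\sigma_{\mathbf{a}}^2 = E\Bigl[\sum_{i=1}^{k} s_i^2\Bigr] = k\, E[s_1^2],
\end{equation*}
where the last equality uses the exchangeability of $(x_1,\dots,x_k)$. This reduces the problem to estimating a single scalar moment of one softmax coordinate.

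Next I would expand $s_1 = e^{x_1}/Z$ with $Z = \sum_{j} e^{x_j}$, and split $Z = e^{x_1} + Z'$ with $Z' = \sum_{j\ne 1} e^{x_j}$. Since $E[e^{x_j}] = e^{1/2}$, we have $E[Z'] = (k-1)e^{1/2}$, and for $k \gg 1$ the summand $Z'$ concentrates around its mean while dominating the single term $e^{x_1}$. I would then write $s_1^2 = e^{2x_1}/(e^{x_1}+Z')^2$ and perform a first-order Taylor expansion of $1/(e^{x_1}+Z')^2$ about $Z' = E[Z']$ (equivalently, neglecting $e^{x_1}$ compared to $Z'$ and replacing $Z'$ by its mean). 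Using $E[e^{2x_1}] = e^2$ and independence of $x_1$ from $Z'$, this yields
\begin{equation*}
E[s_1^2] \;\approx\; \frac{E[e^{2x_1}]}{E[Z']^2} \;=\; \frac{e^2}{(k-1)^2 e} \;=\; \frac{e}{(k-1)^2}.
\end{equation*}
Multiplying by $k$ gives $\sigma_{\mathbf{a}}^2 \approx k\cdot e/(k-1)^2 \sim e/k$, so $\sigma_{\mathbf{a}}^2 \propto 1/k$ to leading order.

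The main obstacle is the Taylor step: $1/Z^2$ is a nonlinear function of a sum of heavy-tailed (log-normal) random variables, so justifying that the first-order approximation captures the correct scaling requires invoking the concentration of $Z'$ about $(k-1)e^{1/2}$ and bounding the correction terms, which behave as lower-order in $1/k$. Since the proposition explicitly restricts the claim to ``up to a first-order Taylor approximation'' and to the regime $k \gg 1$, I would state the expansion and argue that fluctuations of $Z'$ contribute only higher-order corrections relative to the $1/k$ leading term, without producing sharp error bounds.
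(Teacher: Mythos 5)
Your proposal is correct and reaches the paper's leading-order answer $\sigma^2_{\mathbf{a}}\approx \me/k$, but the route is genuinely different, and in one respect cleaner. The paper works coordinate-wise with the random vector $\mathbf{s}=\mathbf{n}/\mathbf{d}$: it computes $\mu_{\mathbf{s}}$ and $\sigma^2_{\mathbf{s}}$ via the delta method for a ratio of (dependent) random variables, explicitly including the $\mathrm{Cov}[\mathbf{n},\mathbf{d}]=\sigma^2_{\mathbf{n}}$ term in the Taylor expansion, and then assembles $\sigma^2_{\mathbf{a}}$ from the variance-of-a-product formula $\sigma^2_{\mathbf{s}}\sigma^2_{\mathbf{V}}+\sigma^2_{\mathbf{s}}\mu^2_{\mathbf{V}}+\sigma^2_{\mathbf{V}}\mu^2_{\mathbf{s}}$ summed over $i$ (legal here because the cross-covariances vanish under $\mu_{\mathbf{V}}=0$). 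You instead apply the law of total variance conditioning on $\mathbf{s}$, which exactly and immediately collapses the problem to $\sigma^2_{\mathbf{a}}=E\bigl[\sum_i s_i^2\bigr]=k\,E[s_1^2]$ — no product-variance formula and no need to worry about cross-terms, since the vanishing of $\mathrm{Var}[E[a_j\mid\mathbf{s}]]$ handles them automatically. You then estimate $E[s_1^2]$ by splitting off $Z'=\sum_{j\ne 1}\me^{x_j}$, exploiting independence of $x_1$ and $Z'$, and replacing $Z\approx E[Z']$; this is a cruder approximation than the paper's (it drops the $\me^{x_1}$ in the denominator and the variance/covariance corrections that the paper carries), and indeed your next-to-leading term disagrees with the paper's ($+2\me/k^3$ versus $-(\me-1)/k^3$ in $E[s_1^2]$), but since the claim is only about the $1/k$ scaling for $k\gg 1$, this does not affect the conclusion. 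Net effect: your total-variance decomposition is exact and conceptually simpler, at the cost of a less careful first-order expansion of the scalar moment; the paper's delta-method version is more faithful at next-to-leading order but requires tracking the numerator–denominator covariance by hand.
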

\begin{proof} 
Recall that by the definition of the softmax function, $s_i = \text{softmax}(\mathbf{x})_i = \frac{e^{x_i}}{\sum_{j=1}^{k} e^{x_j}}$. Denote the vector of elements' numerators $e^{x_i}$ as $\mathbf{n}$ and the vector of denominators $\sum_{j=1}^{k} e^{x_j}$ as $\mathbf{d}$, such that $\mathbf{s} = \frac{\mathbf{n}}{\mathbf{d}}$. Since $x_i \overset{\text{iid}}{\sim} \mathcal{N}(0,1)$, $\mathbf{n}$ is log-normally distributed and $\mathbf{d}$ is a sum of log-normals. This implies that\footnote{See Appendix ~\ref{subsec:softmax-covariance} for the derivation of $\mathrm{Cov}[\mathbf{n}, \mathbf{d}]$}:
\begin{equation}
   \begin{split}
       \mu_\mathbf{n} = e^{1/2}, \quad \sigma^2_\mathbf{n} = e(e-1) \\
       \mu_\mathbf{d} = ke^{1/2}, \quad \sigma^2_\mathbf{d} = ke(e-1) \\
       \mathrm{Cov}[\mathbf{n}, \mathbf{d}] = \sigma^2_\mathbf{n} = e(e-1)
   \end{split}
   \label{eq:softmax-dist-properties}
\end{equation}
We can then use first-order Taylor approximations to estimate the moments of $\mathbf{s}$ as the ratio $\frac{\mathbf{n}}{\mathbf{d}}$, as shown in \citet{statisticalInference}, to obtain:
\begin{equation}
\mu_\mathbf{s} = \mathrm{E}\left[ \frac{\mathbf{n}}{\mathbf{d}} \right] = \frac{\mu_\mathbf{n}}{\mu_\mathbf{d}} = \frac{1}{k} 
\label{eq:softmax-mean}
\end{equation}
\begin{equation}
   \begin{split}
       \sigma^2_\mathbf{s} = \mathrm{Var}\left[ \frac{\mathbf{n}}{\mathbf{d}} \right] &\approx \frac{\sigma^2_\mathbf{n}}{\mu^2_\mathbf{d}} + \frac{\mu^2_\mathbf{n} \sigma^2_\mathbf{d}}{\mu^4_\mathbf{d}} - 2\frac{\mu_\mathbf{n} \mathrm{Cov}[\mathbf{n}, \mathbf{d}]}{\mu^3_\mathbf{d}} \\
       &= \frac{e-1}{k^2} - \frac{e-1}{k^3}
   \end{split}
\end{equation}
Note that Eq.~\ref{eq:softmax-mean} holds exactly from the fact that all $k$ entries in $\mathbf{s}$ are positive and must sum to 1.
Now, because each element $a_j = \sum_{i=1}^{k} s_iV_{ij}$, with independent entries $V_{ij}$, and with the fact that $\mu_{\mathbf{V}} = 0$ and $\sigma^2_{\mathbf{V}} = 1$, the mean and variance of $\mathbf{a}$ can be determined as:
\begin{equation}
\mu_\mathbf{a} = \sum_{i=1}^{k} \mu_\mathbf{s} \mu_\mathbf{V} = 0
\end{equation}
\begin{equation}
\sigma^2_\mathbf{a} = \sum_{i=1}^{k} \sigma^2_\mathbf{s} \sigma^2_\mathbf{V} + \sigma^2_\mathbf{s} \mu^2_\mathbf{V} + \sigma^2_\mathbf{V} \mu^2_\mathbf{s} = \frac{e}{k} - \frac{e-1}{k^2}
\end{equation}
The first term dominates for large $k$ and so $\sigma^2_\mathbf{a} \sim \frac{1}{k}$.
\end{proof}
In the causal self-attention operation shown in Eq.~\ref{eq:standard_attn}, the attention logits matrix $\frac{\mathbf{Q}\mathbf{K}^T}{\sqrt{d}}$ is causally masked such that the row of logits for a token at sequence position $k$ has length $k$. For a given token, by Prop.~\ref{proof:attn-output-variance}, the output of the self-attention operation will therefore have variance inversely related to that token's sequence position $k$. This causes tokens that appear later in the sequence to have much smaller variance than those that appear earlier, as shown in Fig.~\ref{attn-output-variance}.

To address this issue, we make use of a basic property of the variance of linear combinations of independent random variables. With $\mathbf{a}(k)$ denoting the outputs of self-attention applied over a sequence of length $k$, the variance of $\mathbf{a}(k)$ (denoted $\sigma^2_{\mathbf{a}(k)}$) is the variance of a sum of $k$ random variables $\{X_i,\ldots,X_k\}$ with coefficients $\mathbf{c} \in \mathbb{R}^k$:
\begin{equation}
   \begin{split}
   &\mathrm{Var}\left[ \sum_{i=1}^{k} c_i X_i \right] = \sum_i c_i^2 \mathrm{Var}[X_i] = \mathbf{c}^T \mathbf{v},
   \end{split}
\end{equation}
where $v_i \triangleq \mathrm{Var}[X_i]$, and the equality holds if all $X_i$ are independent. If $\forall i\colon  v_i = 1$, we further have $\sigma^2_{\mathbf{a}(k)} = \|\mathbf{c}\|_2$.

Now recall that the softmax operation outputs positive coefficients $\mathbf{s}$ that sum to 1. This means that if we simply set coefficients $c_i = \sqrt{s_i}$, we obtain:
\begin{equation}
\sigma^2_{\mathbf{a}(k)} = \|\mathbf{c}\|_2 = \sqrt{\sum_i c_i^2} = \sqrt{\sum_i s_i} = 1.
\end{equation}
That is, by taking the square root of attention scores, attention can be made variance-preserving for independent value tokens. This modification, which we term ``Square-Root Softmax attention'', is shown in Eq.~\ref{eq:sqrt_softmax_attn}. Square-Root Softmax attention is also easily implemented via modern attention kernels like Flex-Attention \cite{flexAttention}.
\begin{equation}
   \text{Attention}(\mathbf{Q}, \mathbf{K}, \mathbf{V}) = \sqrt{\text{softmax}\left(\frac{\mathbf{Q}\mathbf{K}^T}{\sqrt{d_k}}\right)}\mathbf{V}
   \label{eq:sqrt_softmax_attn}
\end{equation}
In practice, standard self-attention does have diminishing $\sigma$ as sequence position increases; however, the observed variance is consistently higher than predicted by the above analysis of independent elements. This same effect is observed even when using Square-Root Softmax attention, causing observed $\sigma$ to increase over sequence position instead (Fig.~\ref{attn-output-variance}).

\begin{figure}[h]
\centering
\includegraphics[width=1.0\columnwidth]{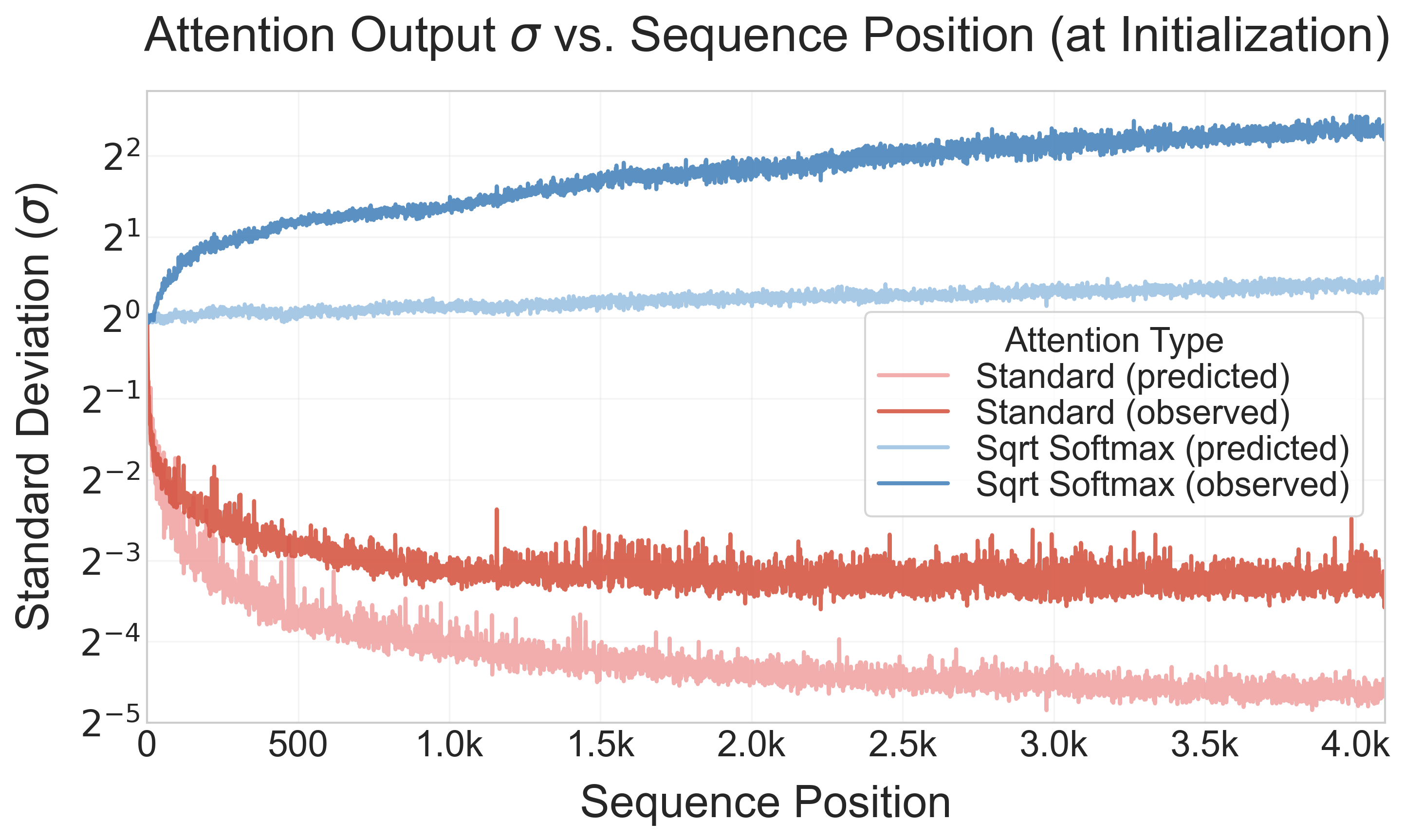}
\caption{\textbf{Attention output variance changes over sequence length.} For standard attention, $\sigma$ decreases over sequence position both when simulated with iid value tokens (light red) and when observed in training (red). Taking the square root of attention scores keeps $\sigma$ constant when simulated with iid value tokens (light blue), but during training (blue), causes $\sigma$ to increase with sequence position. In practice, neither attention variant provides a consistent scale across outputs.}
\label{attn-output-variance}
\end{figure}

We provide a mechanistic explanation for this phenomenon: this increase in attention variance is an unavoidable consequence of the statistics of natural data. If all value tokens are truly independent, then Square-Root Softmax attention keeps $\sigma_\mathbf{a}$ constant. However, due to a high number of repeated tokens in real text data, value tokens are often highly correlated (Fig.~\ref{values-cossim}). Due to this correlation, $\sigma_\mathbf{a}$ will be higher than predicted, and in the case of standard self-attention, diminish more slowly with respect to the token position.

\begin{figure}[h]
\centering
\includegraphics[width=1.0\columnwidth]{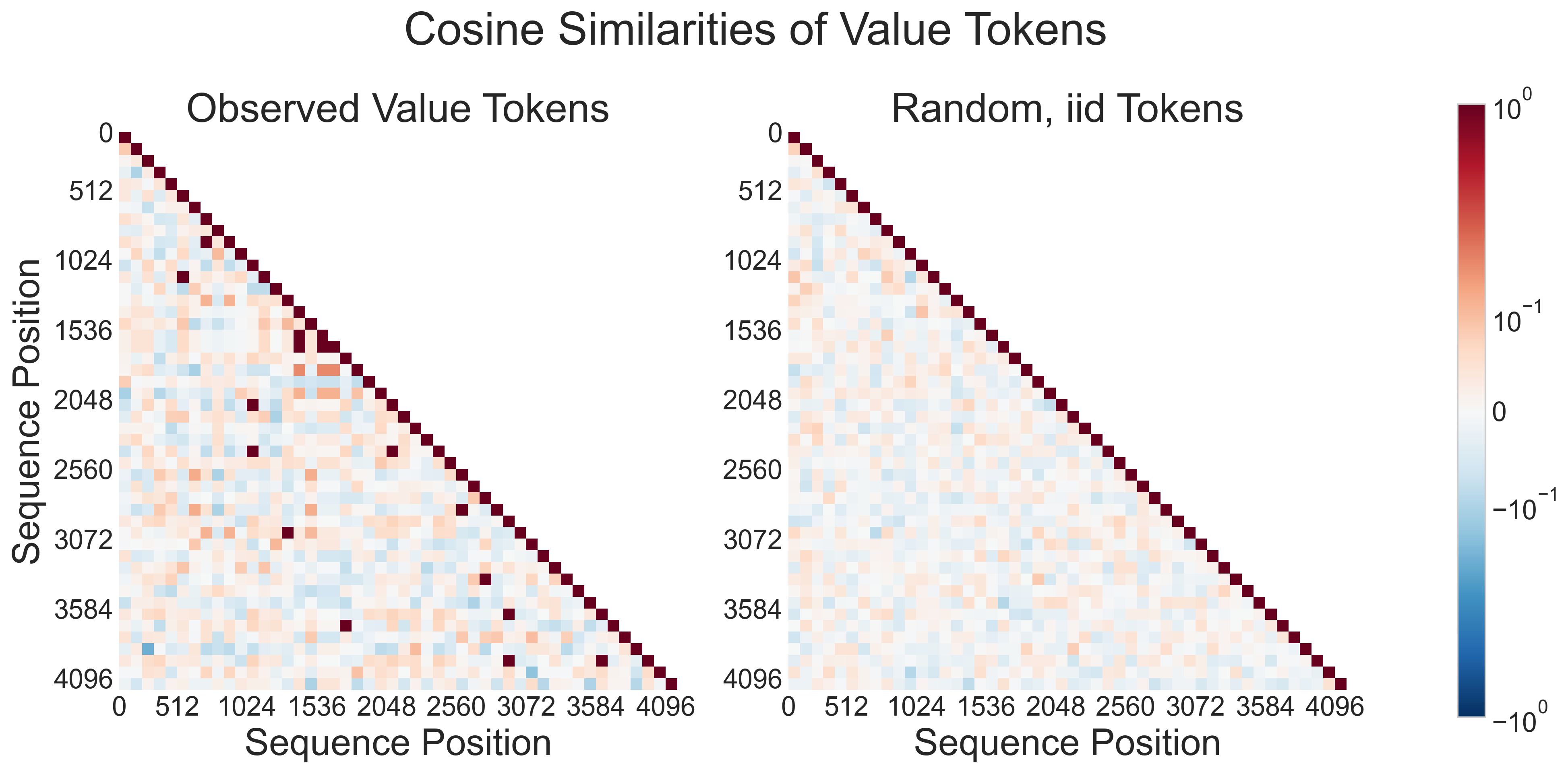}
\caption{\textbf{Value tokens in text are highly correlated.} Comparison of cosine similarity between observed value tokens in a text data distribution versus value tokens $\stackrel{iid}{\sim} \mathcal{N}(0,1)$. Repeated tokens in the value matrix, an unavoidable result of token frequency in real text data, lead to higher-than-random $\sigma$ as sequence position increases (cf. Fig.~\ref{attn-output-variance}).
}
\label{values-cossim}
\end{figure}

\begin{figure}[htbp]
\centering

\begin{subfigure}{\columnwidth}
\centering
\caption{}
\includegraphics[width=0.8\columnwidth]{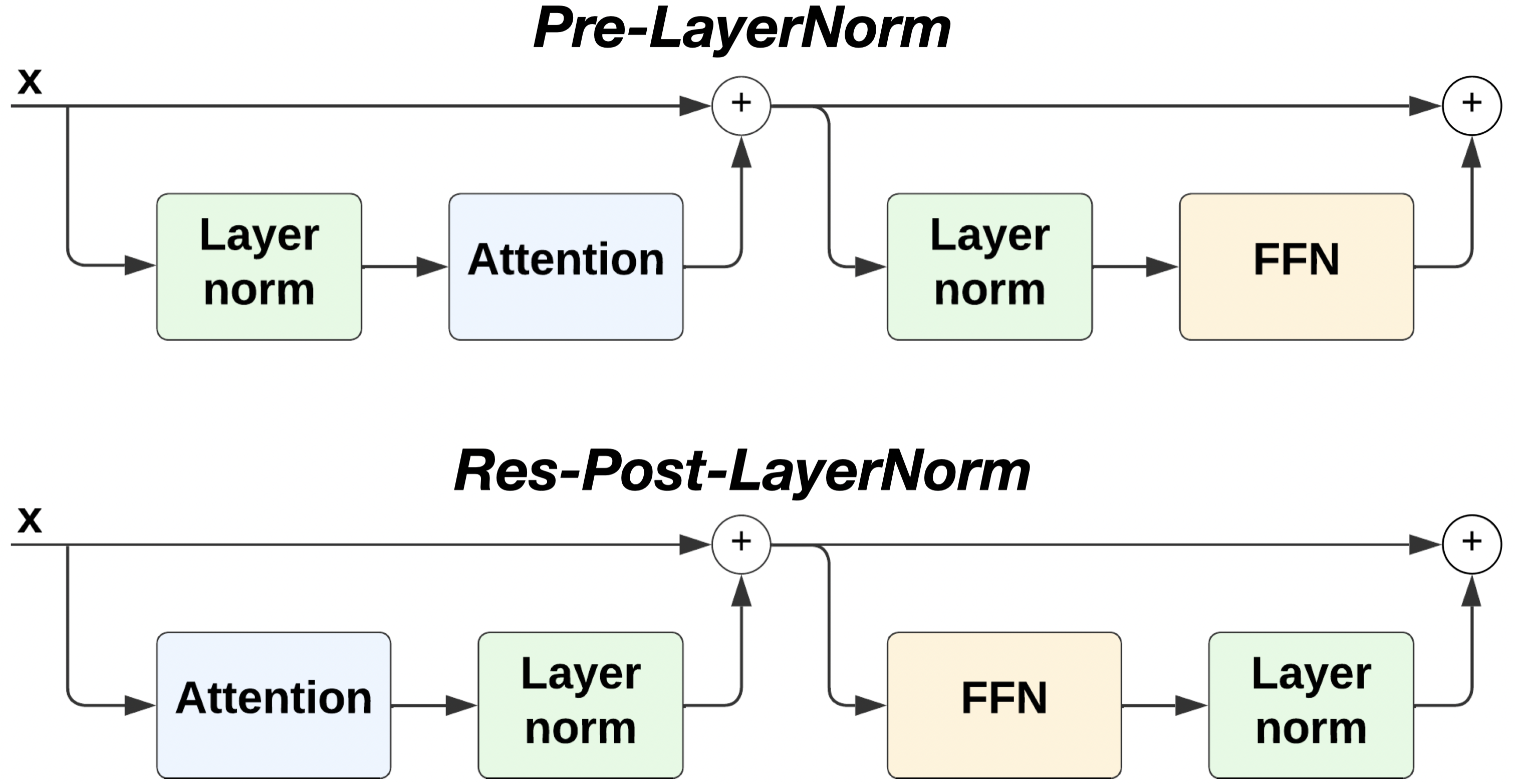}
\label{fig:ln-placement}
\end{subfigure}

\vspace{1em}

\begin{subfigure}{\columnwidth}
\centering
\caption{}
\includegraphics[width=0.8\columnwidth]{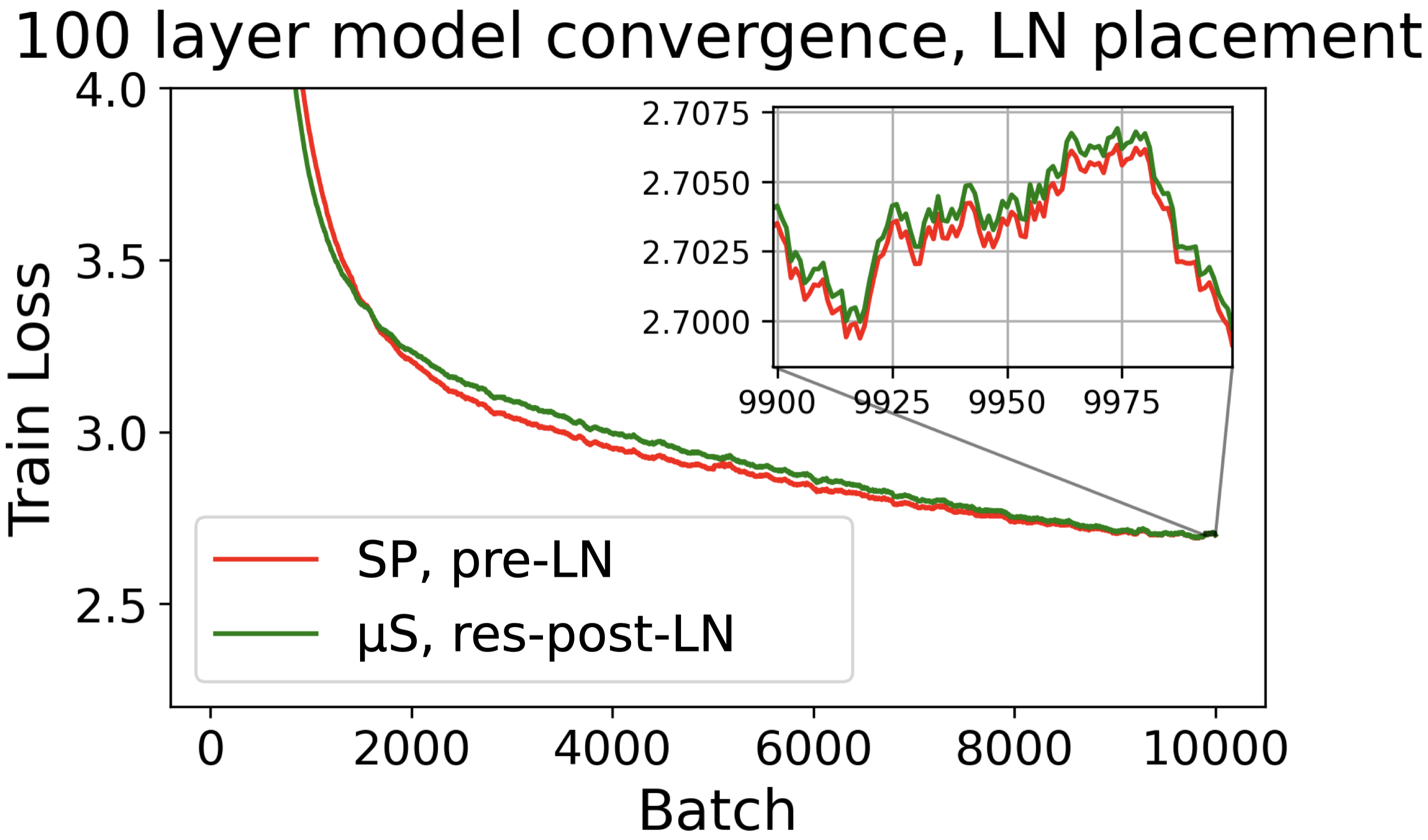}
\label{fig:deep-ln-placement-convergence}
\end{subfigure}

\vspace{1em}

\begin{subfigure}{\columnwidth}
\centering
\caption{}
\includegraphics[width=0.8\columnwidth]{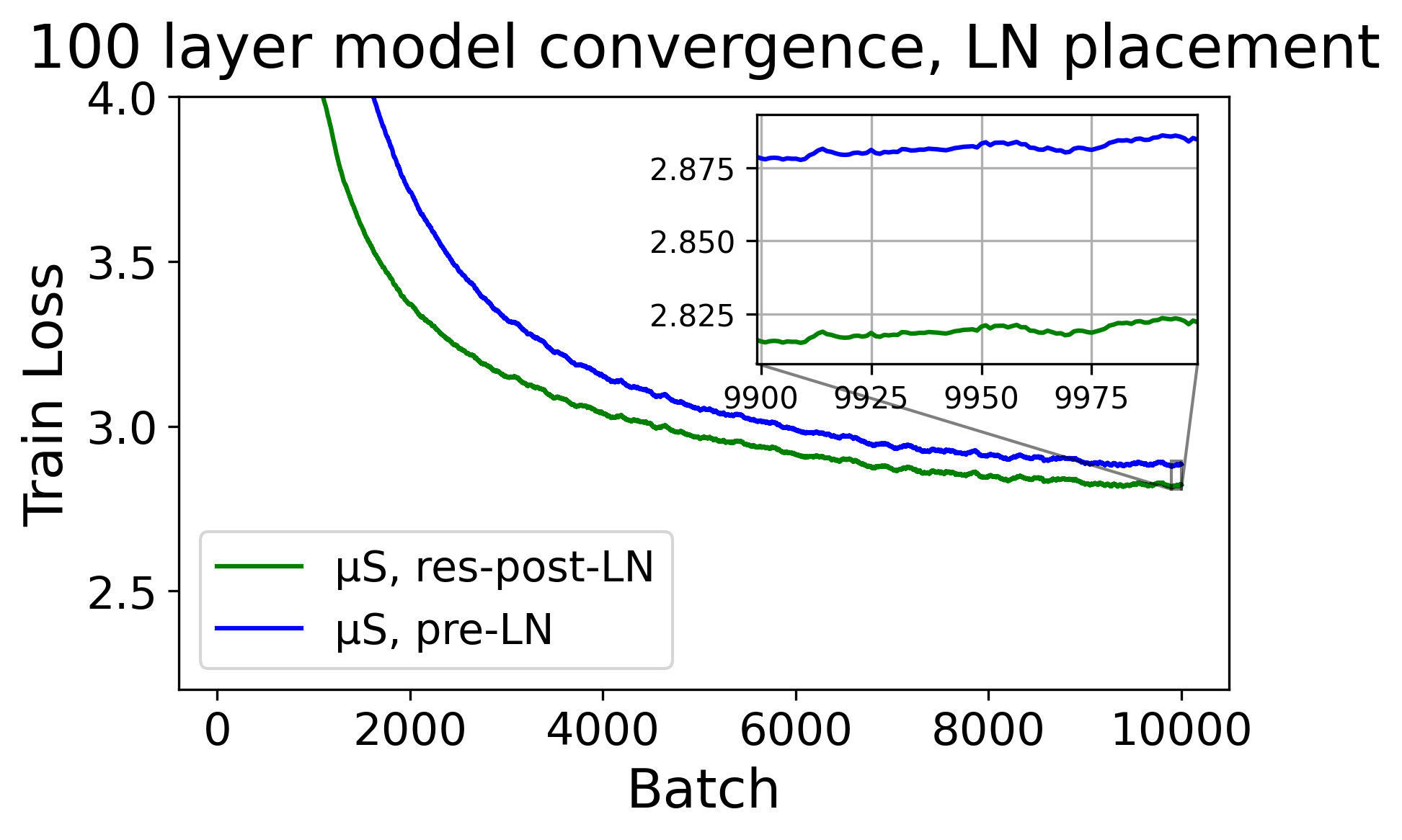}
\label{fig:deep-ln-placement-convergence-µS}
\end{subfigure}

\caption{\textbf{Res-Post-LayerNorm.} \textbf{(a)} Pre-LayerNorm transformer architecture versus Res-Post-LayerNorm architecture. Res-Post-LayerNorm moves the LayerNorm operation from the start of each residual branch to the end \cite{swinV2}. This ensures consistent variance across tokens when added to the residual stream. In contrast, Pre-LayerNorm networks permit unnormalized representations with inconsistent variance to be added to the residual stream, as shown with self-attention outputs in Fig.~\ref{attn-output-variance}. \textbf{(b)} Convergence test loss curves with 100-layer models show that µS with Res-Post-LayerNorm achieves nearly identical convergence versus SP with Pre-LayerNorm. \textbf{(c)} Additional convergence tests with 100-layer models show that Res-Post-LayerNorm achieves better convergence over Pre-LayerNorm with µS. 
}
\label{combined_respostln}
\end{figure}

To address this inconsistency in attention output variance, we use Res-Post-LayerNorm placement, as shown in Fig.~\ref{combined_respostln}(a).
This architecture change consists of moving the normalization operation from the start of each residual branch to the end, and was first proposed in \citet{swinV2} for training stability. Res-Post-LayerNorm ensures consistent $\sigma$ for all tokens in the residual stream, regardless of sequence position, correlation with other tokens, or the distribution of attention scores. A convergence test on 100-layer models validating the Res-Post-LayerNorm transformer against the standard Pre-LayerNorm transformer is shown in Fig.~\ref{combined_respostln}(b). All µS models we train use Res-Post-LayerNorm.

\subsection{Residual Modification Schemes}

Every skip connection in a neural network adds another tensor to the residual stream. Summing all these tensors tends to increase the variance of the residual stream deeper in the network. To make residual connections variance-preserving instead, \citet{unitScaling} proposed replacing simple summation with weighted summation, where the weights $a$ and $b$ of the skip connection and residual branch satisfy $a^2 + b^2 = 1$. They proposed two methods for setting these coefficients: \textit{fixed} and \textit{running-mean}, which are shown in Eq.~\ref{eq:res-mod-fixed} and Eq.~\ref{eq:res-mod-running-mean}, respectively. The former uses a constant coefficient $\tau$, while the latter uses coefficients that are a function of the layer index $l$. The standard residual layer modification is shown in Eq.~\ref{eq:res-mod-standard}.

\begin{equation}
   \text{standard}: x_{l+1} = x_l + f(x_l)
   \label{eq:res-mod-standard}
\end{equation}
\begin{equation}
   \text{fixed}(\tau): x_{l+1} = \sqrt{1-\tau} \cdot x_l + \sqrt{\tau} \cdot f(x_l)
   \label{eq:res-mod-fixed}
\end{equation}
\begin{equation}
   \text{running-mean}: x_{l+1} = \sqrt{\frac{l}{l+1}} \cdot x_l + \sqrt{\frac{1}{l+1}} \cdot f(x_l)
   \label{eq:res-mod-running-mean}
\end{equation}

As shown in Fig.~\ref{res-mod-runs}, we found that using either modification is better than the standard approach, with the \textit{fixed} scheme providing better convergence than the \textit{running-mean} scheme. All µS models we train therefore use the \textit{fixed} scheme. We set the coefficient $\tau$ based on the depth using the results in Appendix~\ref{subsec:residual-modification}.

\begin{figure}[h]
\begin{center}
\includegraphics[width=1.0\columnwidth]{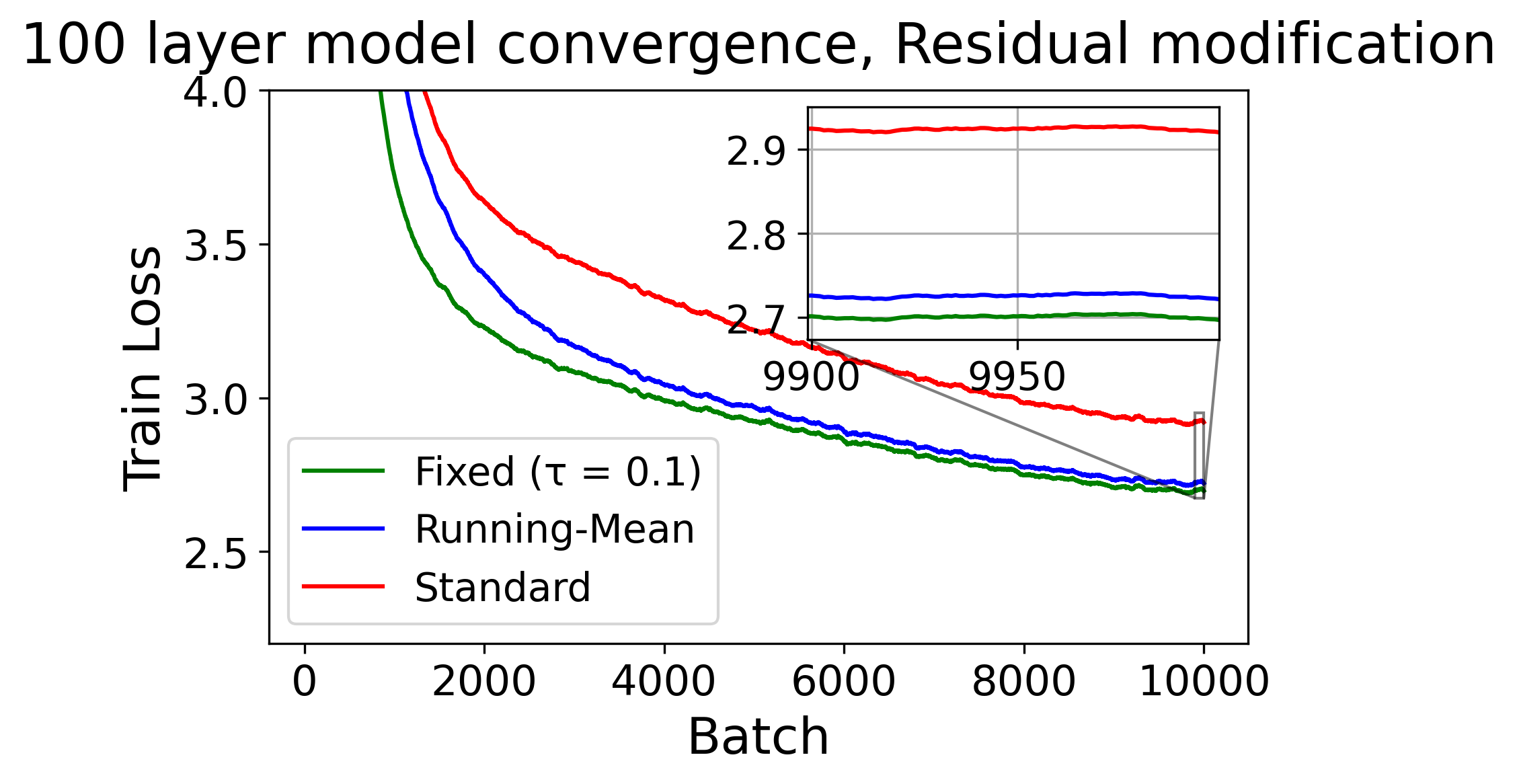}
\end{center}
\caption{\textbf{Residual modification schemes affect µnit Scaled model convergence.} The \textit{fixed} residual modification (green, Eq.~\ref{eq:res-mod-fixed}) achieves better training convergence for deep transformers than the \textit{running-mean} residual modification (blue, Eq.~\ref{eq:res-mod-running-mean}). The \textit{fixed} residual coefficient for this model is $\tau = 0.1$. Both of these settings outperform the standard residual layer modification (red, Eq.~\ref{eq:res-mod-standard}).}
\label{res-mod-runs}
\end{figure}

\subsection{Hyperparameter Transfer with µnit Scaling}\label{sec:hparam-transfer}

Zero-shot hyperparameter transfer allows hyperparameters to be tuned on a small proxy network, then directly used on much larger networks without any further tuning \cite{tensorProgramsV}. The width of the small proxy network is typically referred to as the ``base width'', or $d_\text{base}$. Because it eliminates the need to sweep hyperparameters at a large scale, such hyperparameter transfer yields massive compute savings.

Hyperparameter transfer with µnit Scaling follows from neural network equivalencies set forth in \citet[Appendix J.2.1]{tensorProgramsV}, reproduced below for convenience. As detailed in \citet{uMuP}, Equations~\ref{eq:weight_init},~\ref{eq:hidden_layer}, and~\ref{eq:weight_update} define the hidden layer in a model undergoing training. All hidden layers are initialized with weights $\mathbf{W}_0$ drawn from a normal distribution with variance $b^2$, use a learning rate of $c$, and have an output multiplier $a$. $\mathbf{X}$ and $\mathbf{Y}$ denote input and output activation matrices respectively; $t$ is the training time step; and $\mathbf{\Phi}_t(\nabla\mathcal{L}_0, \ldots, \nabla\mathcal{L}_t)$ denotes the weight update for time step $t$ using prior loss gradients.
\begin{equation}
\begin{split}
   \mathbf{W}_0 \sim \mathcal{N}(0, b^2)
\end{split}
\label{eq:weight_init}
\end{equation}
\begin{equation}
\begin{split}
   \mathbf{Y} = a \cdot \mathbf{X}\mathbf{W}_t
\end{split}
\label{eq:hidden_layer}
\end{equation}
\begin{equation}
\mathbf{W}_{t+1} = \mathbf{W}_t + c \cdot 
   \mathbf{\Phi}_t(\nabla\mathcal{L}_0, \ldots, \nabla\mathcal{L}_t)
\label{eq:weight_update}
\end{equation}
Under Adam-like optimizers, the output of this hidden layer is invariant to any scale factor $\theta > 0$ that changes $a, b, c$ as:
\begin{equation}
   a \leftarrow a\theta, \quad b \leftarrow b/\theta, \quad c \leftarrow c/\theta
   \label{eq:ABC_equivalence}
\end{equation}
Under µP, $a = 1$, $b = \frac{1}{\sqrt{\text{fan\_in}}}$, and $c = \frac{1}{\text{fan\_in}}$. If we instead set $\theta = \frac{1}{\sqrt{\text{fan\_in}}}$, we obtain:
\begin{equation}
   a = \frac{1}{\sqrt{\text{fan\_in}}}, \quad b = 1, \quad c = \frac{1}{\sqrt{\text{fan\_in}}}
   \label{eq:µS_equivalence}
\end{equation}
Notice that $a=\frac{1}{\sqrt{\text{fan\_in}}}$ and $b=1$ are exactly the output multiplier and unit initialization that Unit Scaling requires. Therefore, the learning rate for hidden layers should scale as $\frac{1}{\sqrt{\text{fan\_in}}}$ for Unit Scaled models. This leads to the µS hyperparameter transfer scheme in Table~\ref{µS_table}. 

In practice, given a base model with a width $d_\text{base}$, a new model with a width $d_\text{new}$, and optimal base model learning rate $\eta^*_\text{base}$, µS keeps $\eta^*_\text{new}$ constant for the embedding table, all LayerNorm parameters, and the LM head. The learning rate only changes for hidden layers, with $\eta^*_\text{new} = \eta^*_\text{base} \frac{\sqrt{d_\text{base}}}{\sqrt{d_\text{new}}}$. 

\begin{table}[h]
\setlength{\tabcolsep}{3.5pt}
\caption{\textbf{µS scaling rules.} To transfer hyperparameters across model widths with µS, initialize layers, scale their outputs, and modify their learning rates as shown here.}
\label{µS_table}
\begin{center}
\begin{tabular}{lccc}
& \multicolumn{3}{c}{Weight Type} \\ 
\cline{2-4}
& Input Layer & Final Layer & Hidden Layers \\ \hline
Init. Var. & 1 & 1 & 1 \\
Output Mult. & 1 & $1/\text{fan\_in}$ & $1/\sqrt{\text{fan\_in}}$ \\
Adam-like LR & 1 & 1 & $1/\sqrt{\text{fan\_in}}$ \\ \hline
\end{tabular}
\end{center}
\end{table}

In addition to enabling hyperparameter transfer, µS also requires sweeping over a much smaller set of hyperparameters than existing schemes (Table~\ref{hparams_table}).

\begin{table}[h]
\caption{\textbf{Required hyperparameters in transfer schemes.} Hyperparameters used in practice to train transformer models under various schemes. While µP and related schemes provide better hyperparameter transfer than SP, they require sweeping over more hyperparameters to get reasonable model quality. In contrast, µS provides hyperparameter transfer and model quality with a much smaller set of hyperparameters. This makes the implementation simple and makes hyperparameter sweeps less expensive.}
\label{hparams_table}
\begin{center}
\renewcommand{\arraystretch}{1.2}
\begin{tabular}{ccc}
\hline
\renewcommand{\arraystretch}{1.2}
Scheme & \# Hparams & Hparams \\ \hline
\textbf{µS (ours)} & 3 & \makecell{$\eta, \lambda, \tau$} \\ \hline\hline
\textbf{SP} & 3 & \makecell{$\eta, \lambda, \sigma_{\text{init}}$} \\ \hline
\textbf{µP} & 6 & \makecell{$\eta, \lambda, \sigma_{\text{init}},$ \\ $\alpha_{\text{res}}, \alpha_{\text{attn}}, \alpha_{\text{out}}$} \\ \hline
\textbf{u-µP} & 7 & \makecell{$\eta, \lambda, \alpha_{\text{ffn-act}}, \alpha_{\text{attn-softmax}},$ \\ $\alpha_{\text{res}}, \alpha_{\text{res-attn-ratio}}, \alpha_{\text{loss-softmax}}$} \\ \hline
\end{tabular}
\end{center}
\end{table}

\section{Results}

\subsection{Successful Hyperparameter Transfer}

\textbf{Setup:}\quad To evaluate hyperparameter transfer, we first train four-layer decoder-only LLMs with widths of 256 through 8192 using Standard Parametrization (SP) and µnit Scaling (µS). We begin with these small models since doing so allows us to collect ground truth optimal hyperparameters. All models use multi-headed attention \cite{attentionIsAllYouNeed} and were trained for 10,000 training steps with a global batch size of 64 and sequence length of 1024 (i.e., 655M total tokens). SP models use Pre-LayerNorm placement and are trained in both BF16 and FP8 (using TransformerEngine). µS models were trained in both BF16 and FP8 and use Res-Post-LayerNorm placement (Fig.~\ref{combined_respostln}). µS used base models of width 256. For all models described in this and subsequent sections, we used the Lion optimizer \cite{lion} with fully decoupled weight decay and a cosine learning rate schedule decaying to 10\% of the maximum learning rate. For details on why Lion is an Adam-like optimizer for hyperparameter transfer, please refer to Appendix~\ref{subsec:lion}. All models were trained on Nvidia H100 GPUs using the Databricks MosaicML LLMFoundry \cite{foundry}, Composer \cite{composer}, and Streaming \cite{streaming} libraries.

\textbf{Hyperparameters:}\quad We evaluate hyperparameter transfer over learning rate ($\eta$) and weight decay ($\lambda$). While µP \citet{tensorProgramsV} does not give a theoretical basis for $\lambda$ transfer over width, we evaluate its transfer empirically because of its practical importance. Prior work by \citet{largeScaleExplorationMuTransfer} has shown that µP does not admit transfer of $\lambda$ with AdamW. However, \citet{adamWWeightDecay} found that optimal $\lambda$ should scale with model size. To elucidate how $\lambda$ scales with model width, we jointly sweep over both $\eta$ and $\lambda$. We use fully decoupled weight decay, motivated by findings from \citet{smallScaleTransformerInstabilities} that doing so results in more stable training. $\eta$ and $\lambda$ are swept over powers of 2. Based on the relationship between the residual coefficient $\tau$ and depth in Appendix~\ref{subsec:residual-modification}, the residual coefficient $\tau$ is 0.4 for these four-layer models.

As shown in Fig.~\ref{hparam_transfer}, µS models have stable optimal learning rate ($\eta^*$) and weight decay ($\lambda^*$) from width 256 up to width 8192. Mirroring previous findings, $\eta^*$ for SP models decreases as the inverse of the width. $\lambda^*$ transfer across widths is relatively stable for both model types, with µS showing the most consistency.

\begin{figure}[h]
\begin{center}
\includegraphics[width=0.85\columnwidth]{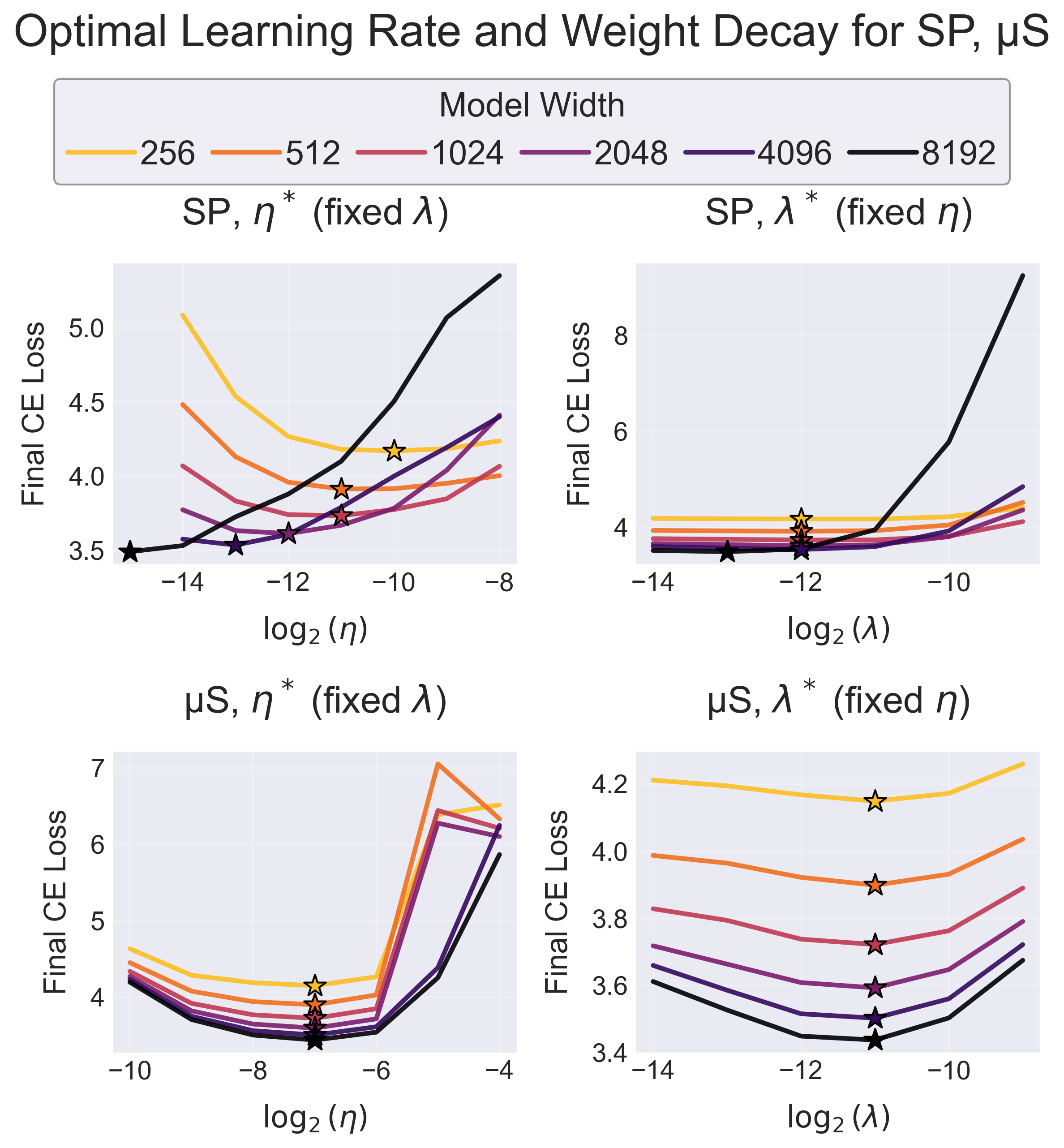}
\end{center}
\caption{\textbf{With µS, optimal learning rate ($\eta^*$) and weight decay ($\lambda^*$) are stable across widths.} Optimal $\eta$ (left column) and $\lambda$ (right column) are shown across a range of model widths for models trained with SP (top row) and µS (bottom row). For each curve, the other hyperparameter is fixed at its optimal value. The base model width is 256. µS models have stable optimal $\eta$ and $\lambda$, even when width increases 32x to 8192. As expected, $\eta^*$ for SP models decreases with width. $\lambda^*$ is relatively stable as the width increases across both model types.}
\label{hparam_transfer}
\end{figure}

\subsection{FP8 Training at Scale}

The previous section demonstrated hyperparameter transfer for small, shallow models. However, the real test of utility is scaling up to multi-billion-parameter models. This section demonstrates that µS allows us to train in FP8 while transferring hyperparameters for realistic model sizes. We also validate that our method is compatible with efficient distributed training.

\begin{table*}[t]
\caption{\textbf{Large model training configurations.} Model training configurations for 1B, 3B, 7B, and 13B models. Only µS models use the residual coefficient $\tau$, which is dictated by model depth using results in Appendix~\ref{subsec:residual-modification}.}
\label{model-config-table}
\begin{center}
{
\begin{tabular}{l|c|c|c|c|c|c|c|c|c|c}
\hline
Model & Params & Tokens & TPR & Steps & Batch Sz. & Seq. Len. & Width & Depth & \# Heads & $\tau$ \\
\hline
\textbf{1B} & 1.6B & 31.5B & 19.4 & 7.5k & 1024 & 4096 & 2048 & 24 & 16 & 0.3 \\
\textbf{3B} & 3.0B & 62.9B & 20.8 & 15k & 1024 & 4096 & 2560 & 32 & 20 & 0.3 \\
\textbf{7B} & 7.3B & 140.0B & 19.3 & 16.7k & 2048 & 4096 & 4096 & 32 & 32 & 0.3 \\
\textbf{13B} & 13.6B & 260.1B & 19.1 & 31k & 2048 & 4096 & 5120 & 40 & 40 & 0.2 \\
\hline
\end{tabular}
}
\end{center}
\end{table*}

\begin{figure*}[htb]
\begin{center}
\includegraphics[width=0.96\textwidth]{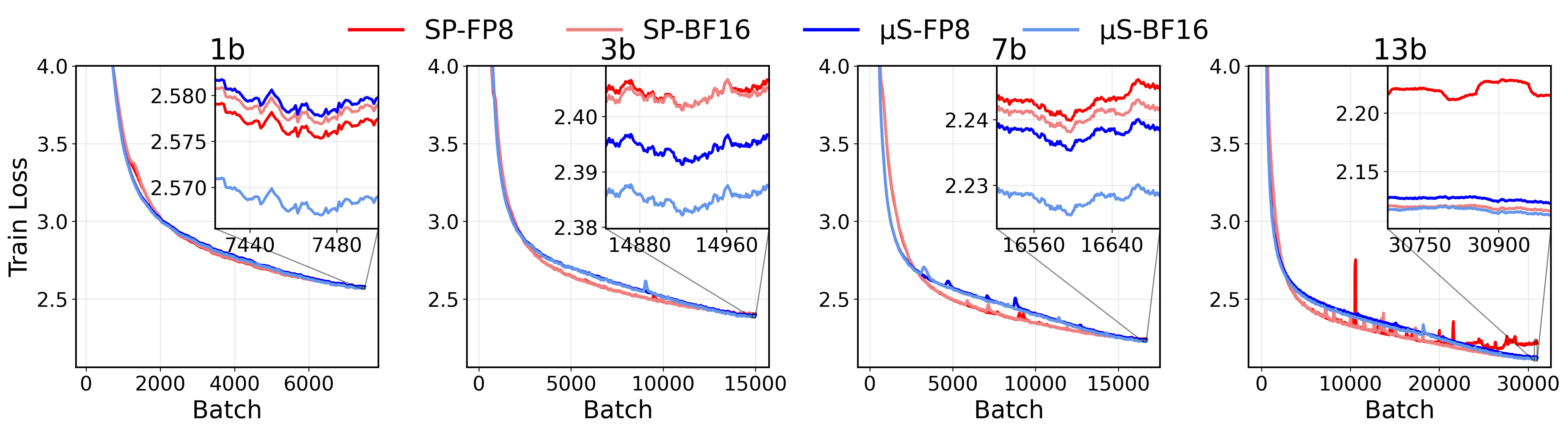}
\end{center}
\caption{\textbf{µS models successfully train in FP8 at scale.} Comparison of training loss curves for standard parametrized (SP) and µnit scaled (µS) models in both FP8 and BF16, across 1B, 3B, 7B, and 13B parameter models. µS models successfully train in FP8 and converge to similar train loss values as their BF16 and SP counterparts. SP FP8 models are trained with TransformerEngine (TE). In our experiments at the 13B scale, SP models trained in FP8 with TE experienced frequent loss spikes and did not properly converge. We achieve state-of-the-art FP8 training efficiency via µS, with further details in Appendix~\ref{sec:training-efficiency}.}
\label{big_runs_loss_curves}
\end{figure*}

\textbf{Setup:}\quad We train 1B, 3B, 7B, and 13B parameter LLMs on approximately compute-optimal token budgets ($\sim$20x token-to-parameter ratio) using SP and µS, and in both BF16 and FP8, resulting in 4 individual models for each model size. The training configurations are detailed in Table~\ref{model-config-table}. 
% Hyperparameters $\eta^*_{\text{base}}$ and $\lambda^*_{\text{base}}$ for all models were found using a base model of width 256.
%Then, $\eta^*$ and $\lambda^*$ for large models was determined for each model type in accordance with the hyperparameter transfer rules detailed in the previous section.
Based on the previous sections' hyperparameter transfer results (Fig.~\ref{hparam_transfer}), we sweep $\eta$ and $\lambda$ on small models with a base width of $d_\text{base} = 256$, then transfer optimal hyperparameters to large models with width $d_\text{new}$, as shown below.

\begin{itemize}
   \item \textbf{SP:} $\; \text{all layers:}\ \eta^*_{\text{new}} = \eta^*_{\text{base}}\frac{d_\text{base}}{d_\text{new}},\ \lambda^*_{\text{new}} = 0.5\lambda^*_\text{base}$
   \item \textbf{µS:} $\; \text{hidden layers:}\ \eta^*_{\text{new}} = \eta^*_{\text{base}}\frac{\sqrt{d_\text{base}}}{\sqrt{d_\text{new}}}, \ \lambda^*_{\text{new}} = \lambda^*_\text{base} \\ \phantom{.......} \text{other layers:}\  \eta^*_{\text{new}} = \eta^*_{\text{base}}, \ \lambda^*_{\text{new}} = \lambda^*_\text{base}$
\end{itemize}

\textbf{Evaluation:}\quad We use the Databricks Model Gauntlet to evaluate the quality of all models on specific tasks \cite{gauntletEval, gauntletCalibration}. These results are shown in Table~\ref{model-evals-table}.

We also compare model convergence via the final training cross-entropy loss averaged over the last 41.9M tokens (corresponding to 10 steps for 1B and 3B models and 5 steps for 7B and 13B models). Training loss curves are shown in Fig.~\ref{big_runs_loss_curves}.

\begin{table*}[htb]
\caption{\textbf{Large model evaluation results.} We evaluate SP and µS models in FP8 and BF16 on a variety of tasks, with best results per eval and model size in bold. Final train loss (avg. over last $\sim$40M tokens) is also shown. µS models have equal or better quality than SP models, and maintain this quality even when training in FP8 as model size increases. Note that 13B SP FP8 models failed to properly converge, denoted by an asterisk.}

\label{model-evals-table}
\begin{center}
\renewcommand{\arraystretch}{1.2}

{\tiny
\setlength{\tabcolsep}{3pt}
\begin{tabular}{|l|cc|cc|cc|cc|cc|cc|cc|cc|}
\hline
& \multicolumn{4}{c|}{\textbf{1B}} & \multicolumn{4}{c|}{\textbf{3B}} & \multicolumn{4}{c|}{\textbf{7B}} & \multicolumn{4}{c|}{\textbf{13B}} \\
& \multicolumn{2}{c|}{SP} & \multicolumn{2}{c|}{µS} & \multicolumn{2}{c|}{SP} & \multicolumn{2}{c|}{µS} & \multicolumn{2}{c|}{SP} & \multicolumn{2}{c|}{µS} & \multicolumn{2}{c|}{SP} & \multicolumn{2}{c|}{µS} \\
& BF16 & FP8 & BF16 & FP8 & BF16 & FP8 & BF16 & FP8 & BF16 & FP8 & BF16 & FP8 & BF16 & FP8* & BF16 & FP8 \\
\hline\hline
Final Train Loss & 2.590 & 2.588 & \textbf{2.580} & 2.590 & 2.399 & 2.400 & \textbf{2.381} & 2.390 & 2.228 & 2.231 & \textbf{2.216} & 2.226 & 2.112 & 2.211 & \textbf{2.108} & 2.119 \\
\hline
ARC Easy (3-shot) & 52.1\% & 52.4\% & \textbf{53.4\%} & 53.3\% & 60.7\% & 60.8\% & \textbf{61.9\%} & 60.8\% & 67.2\% & 65.6\% & 67.1\% & \textbf{68.0\%} & \textbf{72.3\%} & 35.7\% & 71.8\% & 69.7\% \\
Jeopardy (3-shot) & 4.1\% & 4.3\% & \textbf{4.5\%} & 3.5\% & 13.4\% & 11.3\% & \textbf{16.8\%} & 16.6\% & 27.3\% & 27.4\% & \textbf{32.7\%} & 30.6\% & 40.2\% & 0.2\% & \textbf{43.1\%} & 41.7\% \\
SQuAD (3-shot) & 32.6\% & \textbf{33.2\%} & 30.9\% & 31.3\% & 42.3\% & 45.3\% & \textbf{47.9\%} & 47.8\% & 53.9\% & 50.0\% & \textbf{57.1\%} & 55.1\% & 52.9\% & 1.5\% & \textbf{62.8\%} & 61.6\% \\
HellaSwag (0-shot) & 47.2\% & 47.5\% & \textbf{48.3\%} & 47.4\% & 57.1\% & 57.7\% & \textbf{59.6\%} & 59.5\% & 66.8\% & 66.5\% & \textbf{69.2\%} & 68.2\% & 73.9\% & 29.7\% & \textbf{74.6\%} & 74.3\% \\
BIG-bench Wikidata QA (3-shot) & 47.3\% & 48.6\% & 49.3\% & \textbf{50.2\%} & 53.0\% & 55.0\% & 56.2\% & \textbf{57.5\%} & \textbf{60.4\%} & 60.0\% & 60.0\% & 59.9\% & \textbf{66.9\%} & 4.0\% & 66.1\% & 62.9\% \\
WinoGrande (5-shot) & \textbf{55.0\%} & 52.6\% & 51.1\% & 52.0\% & 58.8\% & 54.9\% & \textbf{59.5\%} & 58.6\% & 62.8\% & 64.1\% & \textbf{65.7\%} & 65.3\% & 70.3\% & 57.8\% & \textbf{71.1\%} & 70.5\% \\
OpenBookQA (10-shot) & \textbf{32.8\%} & 32.4\% & 32.0\% & 32.4\% & 37.8\% & 38.2\% & \textbf{38.8\%} & 36.2\% & 42.4\% & 42.0\% & \textbf{44.0\%} & 41.8\% & 45.2\% & 26.6\% & 45.8\% & \textbf{46.6\%} \\
PIQA (0-shot) & 70.7\% & 71.1\% & \textbf{71.5\%} & 71.2\% & 74.5\% & \textbf{75.2\%} & 74.3\% & 74.3\% & \textbf{77.2\%} & 77.0\% & 76.7\% & 76.5\% & 78.7\% & 54.5\% & \textbf{80.1\%} & 79.4\% \\
TriviaQA (3-shot) & 9.7\% & 10.5\% & \textbf{10.8\%} & 9.7\% & 17.8\% & 17.7\% & \textbf{20.4\%} & 18.7\% & 30.2\% & 29.1\% & 32.5\% & \textbf{33.8\%} & 42.4\% & 0.5\% & 44.3\% & \textbf{44.8\%} \\
Winograd (3-shot) & 64.5\% & \textbf{69.6\%} & 67.0\% & 68.9\% & 73.3\% & 74.0\% & 75.8\% & \textbf{76.6\%} & 78.8\% & \textbf{80.6\%} & \textbf{80.6\%} & \textbf{80.6\%} & 83.9\% & 62.6\% & \textbf{86.1\%} & 82.8\% \\
LAMBADA (0-shot) & \textbf{44.8\%} & 44.5\% & 43.6\% & 41.3\% & 52.8\% & 54.2\% & 55.9\% & \textbf{57.4\%} & 60.3\% & 60.7\% & 63.0\% & \textbf{64.6\%} & \textbf{65.7\%} & 34.8\% & 61.6\% & 64.3\% \\
CoQA (0-shot) & 19.3\% & \textbf{21.3\%} & 20.8\% & 20.0\% & 26.2\% & 25.4\% & 27.9\% & \textbf{28.6\%} & 28.2\% & 32.0\% & 33.3\% & \textbf{35.0\%} & 39.8\% & 13.2\% & 44.4\% & \textbf{44.6\%} \\
ARC Challenge (3-shot) & 25.4\% & 26.0\% & \textbf{27.8\%} & 25.0\% & 30.3\% & 30.1\% & \textbf{31.8\%} & 30.9\% & 36.1\% & 35.7\% & 38.3\% & \textbf{39.0\%} & 42.0\% & 27.6\% & \textbf{42.2\%} & 41.5\% \\
COPA (0-shot) & 65.0\% & 68.0\% & 64.0\% & \textbf{70.0\%} & 69.0\% & 68.0\% & 68.0\% & \textbf{71.0\%} & 76.0\% & 76.0\% & 78.0\% & \textbf{80.0\%} & 83.0\% & 62.0\% & \textbf{84.0\%} & 78.0\% \\
BIG-bench Operators (3-shot) & 12.4\% & 12.9\% & 13.8\% & \textbf{14.3\%} & \textbf{19.5\%} & 17.1\% & 17.1\% & 18.6\% & 21.4\% & 20.0\% & 20.0\% & \textbf{23.3\%} & 31.4\% & 24.3\% & \textbf{37.6\%} & 37.1\% \\
GSM8K (0-shot) & 2.4\% & \textbf{2.6\%} & 2.4\% & 2.4\% & \textbf{3.7\%} & 1.7\% & 2.3\% & 2.0\% & 3.9\% & \textbf{5.0\%} & 4.0\% & 3.9\% & 8.7\% & 0.0\% & 9.3\% & \textbf{10.9\%} \\
\hline
\end{tabular}
}

\end{center}
\end{table*}

As shown in Fig.~\ref{big_runs_loss_curves}, µS models train stably with FP8 even as the model size increases. We successfully transfer hyperparameters from a narrow base model with a width of 256 to models with widths up to 5120, demonstrating 20x width transfer ($\sim$400x fewer FLOPs per run) in realistic, practical LLM training scenarios. This validates zero-shot hyperparameter transfer using µS. Evaluation results in Table~\ref{model-evals-table} show that µS models achieve equal or better quality than SP models. These models demonstrate that µS successfully combines FP8 training with zero-shot hyperparameter transfer. To emphasize, all hidden layers use FP8 computation, and there are no dynamic scaling factors.

We also note that at the 13B scale, we attempted to remedy the divergence of the SP FP8 model by using multiple different values of $\lambda$, but this did not mitigate the frequent loss spikes and eventual divergence. µS models, by contrast, train stably. We also show the instability in training with Unit Scaling (US) at larger scales in Appendix~\ref{subsec:convergence-scaling}, motivating runs only with SP and µS for our final results.

\subsection{FP8 Training Efficiency}\label{sec:training-efficiency}

To achieve state-of-the-art FP8 distributed training efficiency with µnit Scaling, we make use of operator fusion and static scaling. As shown in Fig.~\ref{mus-fp8-training-speedup}, FP8 training with µS is 25-33\% faster than in BF16, and 1-6\% faster than FP8 training with TransformerEngine (TE) \cite{transformerEngine}. All models were benchmarked on 64 NVIDIA H100 GPUs, and characteristics such as batch size and distributed training configuration were held constant. While TransformerEngine has fused modules such as LayerNorm-Linear or LayerNorm-MLP, we did not use those modules in order to make an equal comparison between µS and TE.

\begin{figure}[h]
\centering
\includegraphics[width=0.9\columnwidth]{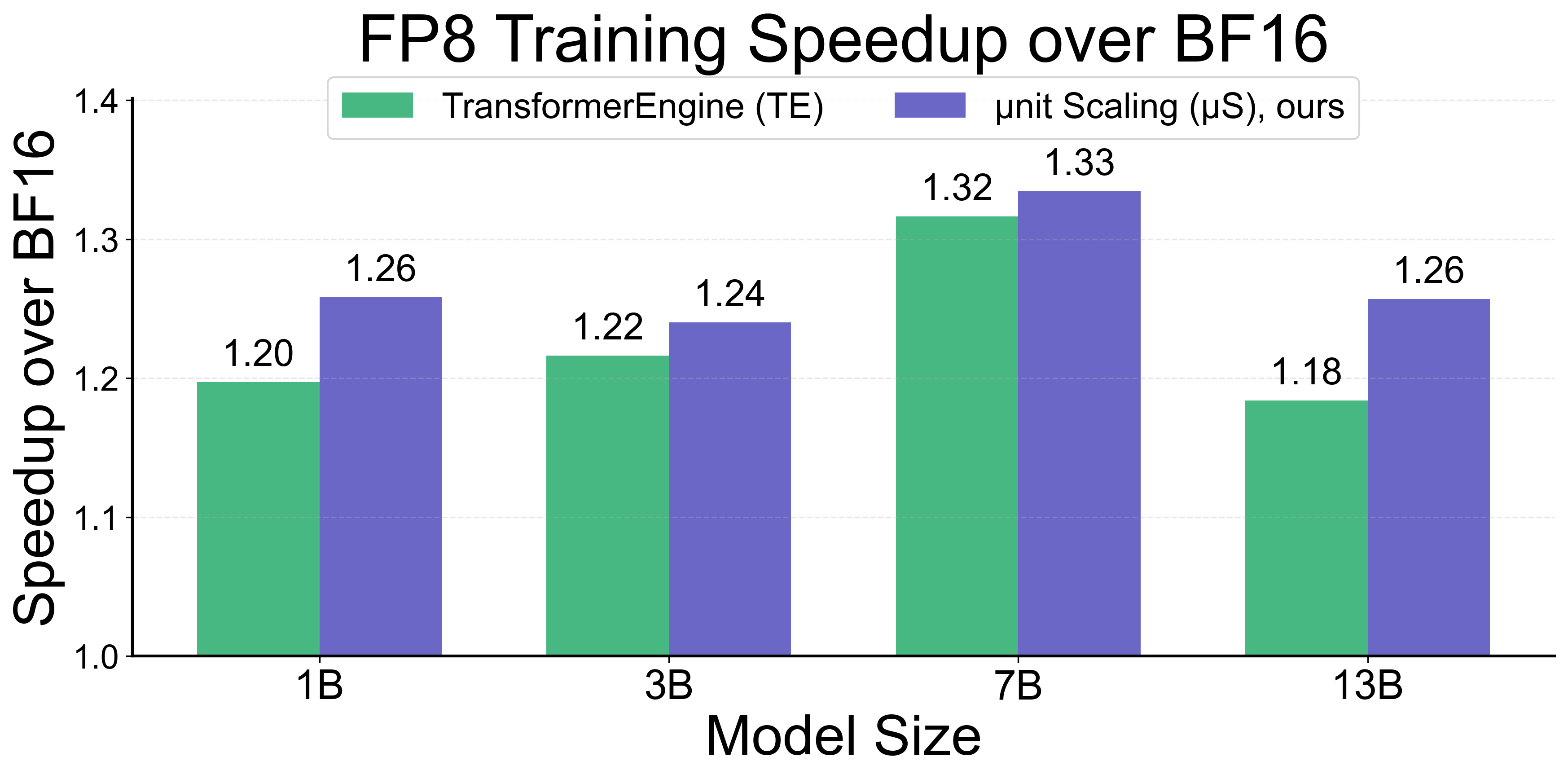}
\caption{\textbf{Training in FP8 with µS achieves state-of-the-art efficiency.} FP8 training with µnit Scaling provides 25-33\% higher throughput than BF16 training and 1-6\% higher throughput than FP8 training with TransformerEngine (TE), over 1B, 3B, 7B, and 13B model sizes. Models are configured as specified in Table~\ref{model-config-table} and benchmarked on 64 NVIDIA H100 GPUs. Static scaling, operator fusion, and simplifications to Unit Scaling make this efficiency possible.}
\label{mus-fp8-training-speedup}
\end{figure}

By relying on dynamic scaling, FP8 training with libraries like TE imposes additional overhead that is eliminated in µS. Calculating the absolute max of both the weight and activation tensors (or storing and reading past absolute max values in a delayed scaling approach) are operations that can be completely discarded in µS. Weights, activations, and gradients can be directly cast to FP8 formats, with a constant $\alpha = \frac{1}{\sqrt{\text{fan\_in}}}$ scaling factor used in the hidden linear layers' GEMM calls, where a GEMM is defined as:
\begin{equation}
   \mathbf{C} \leftarrow \alpha\mathbf{A}\mathbf{B} + \beta\mathbf{C}
   \label{eq:gemm-eq}
\end{equation}
NVIDIA's H100 GPUs support FP8 GEMMs through the $cublasLtMatmul()$ operation \cite{cuBLAS}.

To maximize training speed and mirror TransformerEngine \citet{transformerEngine}, we fuse clipping to the FP8 range, casting to FP8, and transposing into a single Triton \cite{triton} kernel. A transpose is necessary because H100s only support one layout (``TN'') with FP8, but the forward and backward passes use different layouts (thanks to using $\mathbf{W}$ vs $\mathbf{W}^T$).  

\section{Conclusion}

This work presents µnit Scaling (µS), an LLM training method enabling both statically-scaled FP8 computation and zero-shot hyperparameter transfer at scale. µnit Scaling consists of a set of principled model and optimization modifications, including Res-Post-LayerNorm, variance-preserving skip connections, unit-variance initialization, and straightforward scaling of optimization hyperparameters with model width.
Compared to alternatives, µnit Scaling is simpler, faster, more stable across model scales, and has fewer hyperparameters. We demonstrate successful FP8 training with hyperparameter transfer at scale with high-quality µnit Scaled LLMs at 1B, 3B, 7B, and 13B sizes.

% % Acknowledgements should only appear in the accepted version.
% \section*{Acknowledgements}

\section*{Impact Statement}

This paper introduces µnit Scaling (µS), a method designed to enhance the efficiency of Large Language Model (LLM) training through scalable FP8 computation and straightforward hyperparameter transfer. The advancements provided by µS could reduce both the computational and environmental costs associated with training large-scale models, potentially democratizing access to high-performance machine learning by lowering resource requirements. While this work's primary goal is advancing training efficiency, we acknowledge that, as with all machine learning technologies, continued attention to ethical considerations and societal implications remains important.

\bibliography{example_paper}

\begin{thebibliography}{31}
\providecommand{\natexlab}[1]{#1}
\providecommand{\url}[1]{\texttt{#1}}
\expandafter\ifx\csname urlstyle\endcsname\relax
  \providecommand{\doi}[1]{doi: #1}\else
  \providecommand{\doi}{doi: \begingroup \urlstyle{rm}\Url}\fi

\bibitem[Anonymous(2024)]{fp8TrainingTrillionTokenLLMs}
Anonymous.
\newblock Scaling {FP}8 training to trillion-token {LLM}s.
\newblock In \emph{Submitted to The Thirteenth International Conference on Learning Representations}, 2024.
\newblock URL \url{https://openreview.net/forum?id=E1EHO0imOb}.
\newblock under review.

\bibitem[Barton(2024)]{gauntletCalibration}
Barton, T.
\newblock Calibrating the {Mosaic} evaluation {Gauntlet}, 4 2024.
\newblock URL \url{https://www.databricks.com/blog/calibrating-mosaic-evaluation-gauntlet}.

\bibitem[Blake et~al.(2023)Blake, Orr, and Luschi]{unitScaling}
Blake, C., Orr, D., and Luschi, C.
\newblock Unit scaling: Out-of-the-box low-precision training.
\newblock In \emph{International Conference on Machine Learning}, pp.\  2548--2576. PMLR, 2023.

\bibitem[Blake et~al.(2024)Blake, Eichenberg, Dean, Balles, Prince, Deiseroth, Cruz-Salinas, Luschi, Weinbach, and Orr]{uMuP}
Blake, C., Eichenberg, C., Dean, J., Balles, L., Prince, L.~Y., Deiseroth, B., Cruz-Salinas, A.~F., Luschi, C., Weinbach, S., and Orr, D.
\newblock u-\ensuremath{\mu}p: The unit-scaled maximal update parametrization.
\newblock In \emph{2nd Workshop on Advancing Neural Network Training: Computational Efficiency, Scalability, and Resource Optimization (WANT@ICML 2024)}, 2024.
\newblock URL \url{https://openreview.net/forum?id=44NKKzz1n5}.

\bibitem[Casella \& Berger(2002)Casella and Berger]{statisticalInference}
Casella, G. and Berger, R.~L.
\newblock \emph{Statistical Inference}.
\newblock Duxbury, Pacific Grove, CA, 2nd edition, 2002.
\newblock ISBN 978-0-534-24312-8.
\newblock URL \url{https://pages.stat.wisc.edu/~shao/stat610/Casella_Berger_Statistical_Inference.pdf}.

\bibitem[Chen et~al.(2023)Chen, Liang, Huang, Real, Wang, Pham, Dong, Luong, Hsieh, Lu, and Le]{lion}
Chen, X., Liang, C., Huang, D., Real, E., Wang, K., Pham, H., Dong, X., Luong, T., Hsieh, C.-J., Lu, Y., and Le, Q.~V.
\newblock Symbolic discovery of optimization algorithms.
\newblock In \emph{Thirty-seventh Conference on Neural Information Processing Systems}, 2023.
\newblock URL \url{https://openreview.net/forum?id=ne6zeqLFCZ}.

\bibitem[Dettmers et~al.(2022)Dettmers, Lewis, Belkada, and Zettlemoyer]{llmInt8}
Dettmers, T., Lewis, M., Belkada, Y., and Zettlemoyer, L.
\newblock {LLM}.int8(): 8-bit matrix multiplication for transformers at scale.
\newblock In Oh, A.~H., Agarwal, A., Belgrave, D., and Cho, K. (eds.), \emph{Advances in Neural Information Processing Systems}, 2022.
\newblock URL \url{https://openreview.net/forum?id=dXiGWqBoxaD}.

\bibitem[Dohmann(2023)]{gauntletEval}
Dohmann, J.
\newblock Blazingly fast {LLM} evaluation for in-context learning, 2 2023.
\newblock URL \url{https://www.databricks.com/blog/llm-evaluation-for-icl}.

\bibitem[Dong et~al.(2024)Dong, Feng, Guessous, Liang, and He]{flexAttention}
Dong, J., Feng, B., Guessous, D., Liang, Y., and He, H.
\newblock Flex attention: A programming model for generating optimized attention kernels, 2024.
\newblock URL \url{https://arxiv.org/abs/2412.05496}.

\bibitem[Khudia et~al.(2021)Khudia, Huang, Basu, Deng, Liu, Park, and Smelyanskiy]{fbgemm}
Khudia, D., Huang, J., Basu, P., Deng, S., Liu, H., Park, J., and Smelyanskiy, M.
\newblock Fbgemm: Enabling high-performance low-precision deep learning inference, 2021.
\newblock URL \url{https://arxiv.org/abs/2101.05615}.

\bibitem[Kingma \& Ba(2017)Kingma and Ba]{adam}
Kingma, D.~P. and Ba, J.
\newblock Adam: A method for stochastic optimization, 2017.
\newblock URL \url{https://arxiv.org/abs/1412.6980}.

\bibitem[Lingle(2024)]{largeScaleExplorationMuTransfer}
Lingle, L.
\newblock A large-scale exploration of $\mu$-transfer, 2024.
\newblock URL \url{https://arxiv.org/abs/2404.05728}.

\bibitem[Liu et~al.(2022)Liu, Hu, Lin, Yao, Xie, Wei, Ning, Cao, Zhang, Dong, et~al.]{swinV2}
Liu, Z., Hu, H., Lin, Y., Yao, Z., Xie, Z., Wei, Y., Ning, J., Cao, Y., Zhang, Z., Dong, L., et~al.
\newblock Swin transformer v2: Scaling up capacity and resolution.
\newblock In \emph{Proceedings of the IEEE/CVF conference on computer vision and pattern recognition}, pp.\  12009--12019, 2022.

\bibitem[Micikevicius et~al.(2018)Micikevicius, Narang, Alben, Diamos, Elsen, Garcia, Ginsburg, Houston, Kuchaiev, Venkatesh, and Wu]{mixedPrecision}
Micikevicius, P., Narang, S., Alben, J., Diamos, G., Elsen, E., Garcia, D., Ginsburg, B., Houston, M., Kuchaiev, O., Venkatesh, G., and Wu, H.
\newblock Mixed precision training.
\newblock In \emph{International Conference on Learning Representations}, 2018.
\newblock URL \url{https://openreview.net/forum?id=r1gs9JgRZ}.

\bibitem[Micikevicius et~al.(2022)Micikevicius, Stosic, Burgess, Cornea, Dubey, Grisenthwaite, Ha, Heinecke, Judd, Kamalu, et~al.]{fp8Formats}
Micikevicius, P., Stosic, D., Burgess, N., Cornea, M., Dubey, P., Grisenthwaite, R., Ha, S., Heinecke, A., Judd, P., Kamalu, J., et~al.
\newblock Fp8 formats for deep learning.
\newblock \emph{arXiv preprint arXiv:2209.05433}, 2022.

\bibitem[Mirzadeh et~al.(2024)Mirzadeh, Alizadeh-Vahid, Mehta, del Mundo, Tuzel, Samei, Rastegari, and Farajtabar]{reluStrikesBack}
Mirzadeh, S.~I., Alizadeh-Vahid, K., Mehta, S., del Mundo, C.~C., Tuzel, O., Samei, G., Rastegari, M., and Farajtabar, M.
\newblock Re{LU} strikes back: Exploiting activation sparsity in large language models.
\newblock In \emph{The Twelfth International Conference on Learning Representations}, 2024.
\newblock URL \url{https://openreview.net/forum?id=osoWxY8q2E}.

\bibitem[MosaicML(2021)]{composer}
MosaicML.
\newblock Composer.
\newblock \url{https://github.com/mosaicml/composer/}, 2021.

\bibitem[MosaicML(2022{\natexlab{a}})]{foundry}
MosaicML.
\newblock {LLM} {F}oundry.
\newblock \url{<https://github.com/mosaicml/llm-foundry/>}, 2022{\natexlab{a}}.

\bibitem[MosaicML(2022{\natexlab{b}})]{streaming}
MosaicML.
\newblock Streaming.
\newblock \url{<https://github.com/mosaicml/streaming/>}, 2022{\natexlab{b}}.

\bibitem[NVIDIA()]{wgmma}
NVIDIA.
\newblock Asynchronous multiply-and-accumulate instruction: wgmma.mma\_async.
\newblock URL \url{https://docs.nvidia.com/cuda/parallel-thread-execution/#asynchronous-warpgroup-level-matrix-instructions-wgmma-mma}.

\bibitem[NVIDIA(2023)]{transformerEngine}
NVIDIA.
\newblock Transformer{E}ngine, 2023.
\newblock URL \url{https://github.com/NVIDIA/TransformerEngine}.

\bibitem[{NVIDIA Corporation}(2024)]{cuBLAS}
{NVIDIA Corporation}.
\newblock \emph{{cuBLAS}: {cublasLtMatmul()}}.
\newblock NVIDIA, 2024.
\newblock URL \url{https://docs.nvidia.com/cuda/cublas/#cublasltmatmul}.

\bibitem[OLMo et~al.(2024)OLMo, Walsh, Soldaini, Groeneveld, Lo, Arora, Bhagia, Gu, Huang, Jordan, et~al.]{olmo2}
OLMo, T., Walsh, P., Soldaini, L., Groeneveld, D., Lo, K., Arora, S., Bhagia, A., Gu, Y., Huang, S., Jordan, M., et~al.
\newblock 2 olmo 2 furious.
\newblock \emph{arXiv preprint arXiv:2501.00656}, 2024.

\bibitem[Sun et~al.(2019)Sun, Choi, Chen, Wang, Venkataramani, Srinivasan, Cui, Zhang, and Gopalakrishnan]{hybridFp8}
Sun, X., Choi, J., Chen, C.-Y., Wang, N., Venkataramani, S., Srinivasan, V.~V., Cui, X., Zhang, W., and Gopalakrishnan, K.
\newblock Hybrid 8-bit floating point (hfp8) training and inference for deep neural networks.
\newblock In Wallach, H., Larochelle, H., Beygelzimer, A., d\textquotesingle Alch\'{e}-Buc, F., Fox, E., and Garnett, R. (eds.), \emph{Advances in Neural Information Processing Systems}, volume~32. Curran Associates, Inc., 2019.
\newblock URL \url{https://proceedings.neurips.cc/paper_files/paper/2019/file/65fc9fb4897a89789352e211ca2d398f-Paper.pdf}.

\bibitem[Tillet et~al.(2019)Tillet, Kung, and Cox]{triton}
Tillet, P., Kung, H.~T., and Cox, D.
\newblock Triton: an intermediate language and compiler for tiled neural network computations.
\newblock In \emph{Proceedings of the 3rd ACM SIGPLAN International Workshop on Machine Learning and Programming Languages}, MAPL 2019, pp.\  10–19, New York, NY, USA, 2019. Association for Computing Machinery.
\newblock ISBN 9781450367196.
\newblock \doi{10.1145/3315508.3329973}.
\newblock URL \url{https://doi.org/10.1145/3315508.3329973}.

\bibitem[Vaswani et~al.(2017)Vaswani, Shazeer, Parmar, Uszkoreit, Jones, Gomez, Kaiser, and Polosukhin]{attentionIsAllYouNeed}
Vaswani, A., Shazeer, N., Parmar, N., Uszkoreit, J., Jones, L., Gomez, A.~N., Kaiser, L.~u., and Polosukhin, I.
\newblock Attention is all you need.
\newblock In Guyon, I., Luxburg, U.~V., Bengio, S., Wallach, H., Fergus, R., Vishwanathan, S., and Garnett, R. (eds.), \emph{Advances in Neural Information Processing Systems}, volume~30. Curran Associates, Inc., 2017.
\newblock URL \url{https://proceedings.neurips.cc/paper_files/paper/2017/file/3f5ee243547dee91fbd053c1c4a845aa-Paper.pdf}.

\bibitem[Wang \& Aitchison(2024)Wang and Aitchison]{adamWWeightDecay}
Wang, X. and Aitchison, L.
\newblock How to set {A}dam{W}'s weight decay as you scale model and dataset size, 2024.
\newblock URL \url{https://arxiv.org/abs/2405.13698}.

\bibitem[Wortsman et~al.(2024)Wortsman, Liu, Xiao, Everett, Alemi, Adlam, Co-Reyes, Gur, Kumar, Novak, Pennington, Sohl-Dickstein, Xu, Lee, Gilmer, and Kornblith]{smallScaleTransformerInstabilities}
Wortsman, M., Liu, P.~J., Xiao, L., Everett, K.~E., Alemi, A.~A., Adlam, B., Co-Reyes, J.~D., Gur, I., Kumar, A., Novak, R., Pennington, J., Sohl-Dickstein, J., Xu, K., Lee, J., Gilmer, J., and Kornblith, S.
\newblock Small-scale proxies for large-scale transformer training instabilities.
\newblock In \emph{The Twelfth International Conference on Learning Representations}, 2024.
\newblock URL \url{https://openreview.net/forum?id=d8w0pmvXbZ}.

\bibitem[Yang et~al.(2021)Yang, Hu, Babuschkin, Sidor, Liu, Farhi, Ryder, Pachocki, Chen, and Gao]{tensorProgramsV}
Yang, G., Hu, E.~J., Babuschkin, I., Sidor, S., Liu, X., Farhi, D., Ryder, N., Pachocki, J., Chen, W., and Gao, J.
\newblock Tuning large neural networks via zero-shot hyperparameter transfer.
\newblock In Beygelzimer, A., Dauphin, Y., Liang, P., and Vaughan, J.~W. (eds.), \emph{Advances in Neural Information Processing Systems}, 2021.
\newblock URL \url{https://openreview.net/forum?id=Bx6qKuBM2AD}.

\bibitem[Yang et~al.(2023)Yang, Simon, and Bernstein]{spectralConditionFeatureLearning}
Yang, G., Simon, J.~B., and Bernstein, J.
\newblock A spectral condition for feature learning.
\newblock \emph{arXiv preprint arXiv:2310.17813}, 2023.

\bibitem[Yang et~al.(2024)Yang, Yu, Zhu, and Hayou]{tensorProgramsVI}
Yang, G., Yu, D., Zhu, C., and Hayou, S.
\newblock Tensor programs {VI}: Feature learning in infinite depth neural networks.
\newblock In \emph{The Twelfth International Conference on Learning Representations}, 2024.
\newblock URL \url{https://openreview.net/forum?id=17pVDnpwwl}.

\end{thebibliography}
\bibliographystyle{icml2025}

%%%%%%%%%%%%%%%%%%%%%%%%%%%%%%%%%%%%%%%%%%%%%%%%%%%%%%%%%%%%%%%%%%%%%%%%%%%%%%%
%%%%%%%%%%%%%%%%%%%%%%%%%%%%%%%%%%%%%%%%%%%%%%%%%%%%%%%%%%%%%%%%%%%%%%%%%%%%%%%
% APPENDIX
%%%%%%%%%%%%%%%%%%%%%%%%%%%%%%%%%%%%%%%%%%%%%%%%%%%%%%%%%%%%%%%%%%%%%%%%%%%%%%%
%%%%%%%%%%%%%%%%%%%%%%%%%%%%%%%%%%%%%%%%%%%%%%%%%%%%%%%%%%%%%%%%%%%%%%%%%%%%%%%
\newpage
\appendix
\onecolumn
\appendix
\section{Appendix}

\subsection{Why these modifications?}\label{subsec:why-modifications}
Table~\ref{modifications_table} contains a number of modifications to standard bf16 training setups. Where did these come from? Are they simply a result of trying ideas until something worked? Or are they the result of more principled analysis and ablations?

While we do explain the basis for each modification over the course of the main text, this section summarizes how we arrived at each of them. We can group the origins of these changes into three categories: \textbf{simple math}, \textbf{adhering to prior art}, and \textbf{ablation experiments}. 

\subsubsection{Simple math}
Recall that, in order to ensure stable training and consistent hyperparameter meanings, we wish to ensure that all weight and activation tensors have unit variance. Enforcing unit variance is difficult because the weights are constantly being modified throughout training. To enforce \textit{exact} unit variance everywhere would require significant overhead in the form of added normalization operations. We therefore relax the constraint to the following:
\begin{enumerate}
    \item Each residual branch must have exactly unit variance
    \item Weight tensors must have unit variance at initialization
    \item Linear layer outputs have unit variance at initialization, assuming the inputs are iid with unit variance.
    \item Weight updates should attempt to preserve the weight and activation variances to the extent that this is possible without significant overhead.
\end{enumerate}
The last three requirements mirror \citet{unitScaling} while the first is stronger.

Our core modifications follow immediately from these requirements and a bit of math.

    \textbf{Unit variance initialization, linear layer scaling factors.}
    Suppose we initialize our weights with unit variance to achieve requirement (2). Given iid standard normal input elements, our outputs will be $\chi^2$ random variables with $k$ degrees of freedom, where $k$ is the contraction dimension. This has a mean and variance of $\text{fan\_in}$ and variance of $2* \text{fan\_in}$, which are nowhere near 1 and so violate requirement (3). The typical solution to this is scaling down the initialization by a factor of $\sqrt{\text{fan\_in}}$, but this violates requirement (2). As observed in \cite{unitScaling}, we can reconcile both by scaling down the outputs by $\sqrt{\text{fan\_in}}$ \textit{at runtime} as part of the GEMM call. This one extra multiply per output element is essentially free, and in fact fused into instructions such as the NVIDIA Hopper architecture's \textrm{wgmma} \cite{wgmma}. See \citet{unitScaling} for further discussion.

    \textbf{Learning rate scaling.}
    Recall from \cite{tensorProgramsV} and Section~\ref{sec:hparam-transfer} that one can scale weight initialization variance, learning rate, and linear layer output arbitrarily as long as all three are scaled according to a precise relationship. Since we have fixed the weight initialization variance to 1 and the output scaling to $\text{fan\_in}^{-\frac{1}{2}}$, our learning rate scale of $\text{fan\_in}^{-\frac{1}{2}}$ is uniquely determined. Further, when changing fan\_in from $d_{base}$ to $d_{new}$, this implies scaling the learning rate by $\frac{\sqrt{d_\text{base}}}{\sqrt{d_\text{new}}}$.

\subsubsection{Adhering to best practices.}
Some aspects of our training recipe are crucial but already common (though not universal) practices. These include:

    \textbf{Weight decay ($\lambda$) scaling.} Recall that decoupled weight decay amounts to multiplying weights by a constant $1 - \lambda, 0 <= \lambda < 1$ during each update. This operation already has the same semantics across model widths.

    \textbf{FP8 hidden layers.} Using \textrm{e4m3} weights and activations along with \textrm{e5m2} gradients is a common practice \cite{transformerEngine, fp8Formats} Clipping instead of overflowing prevents NaN/Inf values. Keeping the first and last layers in higher precision is also common.

\subsubsection{Ablation experiments.}
Two modifications in our recipe can be implemented in multiple ways, so we chose the details based on smaller-scale experimental results.

\textbf{Fixed residual modification.}  In order to satisfy our design goal of having a fixed-variance residual stream, we need to combine the previous residual stream tensor and the latest residual branch output in some manner that preserves variance. As discussed in the paper, this can be done by replacing summation with weighted summation. However, we are left with a degree of freedom in setting the weighting coefficient. To keep the search space small, we consider only the two schemes from \cite{unitScaling} and decide between them based on the experiments in Section~\ref{subsec:residual-modification}.

\textbf{Res-Post-LayerNorm.} As we show in Section~\ref{subsec:self-attn-numerics}, the variance of token representations tends to collapse later in the sequence. If a closed-form correction could exactly undo this effect, we could apply such a correction and avoid modifying the architecture. However, as shown in Figures \ref{attn-output-variance} and \ref{values-cossim}, the pattern of variance collapse is input-dependent and deviates greatly from what iid assumptions would lead one to expect. In order to satisfy our requirement that residual streams have unit variance, we therefore must resort to a blunt instrument: imposing normalization at runtime. We could normalize the residual stream itself, add a normalization op at the end of each residual branch, or move the normalization in a Pre-LN transformer from the start of the branch to the end. We decided to go with the last option because it adds no extra operations, normalizes both the residual stream token embeddings and their updates, is consistent with previous work \cite{swinV2, olmo2}, and worked well in our ablation experiments (Fig~\ref{fig:deep-ln-placement-convergence}).

\subsubsection{Comparison to existing schemes}

As a supplement to to Table \ref{modifications_table} which enumerates the components of µS compared to standard practice (SP), Table \ref{modifications_schemes} compares these components with µP, Unit Scaling, and u-µP.

\begin{table*}[t]
\caption{\textbf{Comparing µS with other schemes} µS components have commonalities and differences with existing training schemes. It is the only one which combines scalable, complete FP8 LLM training with hyperparameter transfer; see Figure \ref{methods-comparison-table} for a comparison of features of low-precision training methods.}
\label{modifications_schemes}
\begin{center}
\begin{tabular}{l|c|c|c}
\hline
\textbf{µS Component} & \textbf{µP} & \textbf{Unit Scaling}  & \textbf{u-µP} \\ \hline
Linear layer scaling factors & Not used & \makecell{Used, but can be different in \\ forward and backward pass. \vspace{0.5mm}} & Used \\ \hline
Res-Post-LayerNorm & Not used & Not used & Not used \\ \hline
``Fixed'' residual modification & Not used & Proposed & Not used \\ \hline
Unit variance initialization & Not used & Used & Used \\ \hline
FP8 hidden layers & Not used & Used, but not at scale & Used, but restricted only to some layers \\ \hline
Learning rate ($\eta$) scaling & Used & Not used & Used \\ \hline
Weight decay ($\lambda$) scaling & Not used & Not used & Used \\ \hline
\end{tabular}
\end{center}
\end{table*}

% Weight decay ($\lambda$) scaling & \makecell{With fully decoupled weight decay, optimal $\lambda$ stays constant for all layers with \\ increasing width. \vspace{0.5mm}} \\ \hline

\subsection{Covariance of softmax numerator and denominator}\label{subsec:softmax-covariance}

In the proof for Prop.~\ref{proof:attn-output-variance}, we state that $\mathrm{Cov}[\mathbf{n}, \mathbf{d}] = \sigma^2_\mathbf{n}$. Here we derive this result. Just as in Sec.~\ref{subsec:self-attn-numerics}, define $\mathbf{s}$ as the output of the softmax function applied to a vector of $k$ independent elements $\mathbf{x}$. The softmax function is defined as $s_i = \text{softmax}(\mathbf{x})_i = \frac{\me^{x_i}}{\sum_{j=1}^{k} \me^{x_j}}$. As shown previously, we denote the vector of elements containing numerators of elements of $\mathbf{s}$ as $\mathbf{n}$ and denominators of elements of $\mathbf{s}$ as $\mathbf{d}$, such that $\mathbf{s} = \frac{\mathbf{n}}{\mathbf{d}}$. By the definition of covariance:
\begin{equation}
\mathrm{Cov}[\mathbf{n}, \mathbf{d}] = \mathrm{E}[(n_i - \mu_\mathbf{n})(d_i - \mu_\mathbf{d})]
\end{equation}
By the definition of softmax, $d_i = \sum_{i=1}^{k} n_i$, and by linearity of expectation, $\mu_\mathbf{d} = k\mu_\mathbf{n}$. Using this, we obtain:
\begin{equation}
    \mathrm{Cov}[\mathbf{n}, \mathbf{d}] = \mathrm{E}[(n_i - \mu_\mathbf{n})(n_1 + n_2 + \ldots + n_i + \ldots + n_k - k\mu_\mathbf{n})]
\end{equation}
Expanding this expression:
\begin{equation}
    \mathrm{Cov}[\mathbf{n}, \mathbf{d}] = \mathrm{E}[(n_i - \mu_\mathbf{n})((n_1 -\mu_\mathbf{n}) + (n_2 - \mu_\mathbf{n}) + \ldots + (n_i - \mu_\mathbf{n}) + \ldots + (n_k - \mu_\mathbf{n}))]
\label{eq:intermediate-result-softmax-cov}
\end{equation}
By linearity of expectation:
\begin{equation}
\mathrm{Cov}[\mathbf{n}, \mathbf{d}] = \mathrm{E}[(n_i - \mu_\mathbf{n})^2] 
+ \sum_{j \neq i} \mathrm{E}[(n_i - \mu_\mathbf{n})(n_j -\mu_\mathbf{n})]
\end{equation}
Because elements of the softmax input $\mathbf{x}$ are independent, and $n_i = \me^{x_i}$, elements of $\mathbf{n}$ are also independent. Therefore $\mathrm{E}[(n_i - \mu_\mathbf{n})(n_j -\mu_\mathbf{n})] = 0$ for $j \neq i$. Then by the definition of variance as $\mathrm{Var}[\mathbf{n}] = \mathrm{E}[(n_i - \mu_\mathbf{n})^2]$, we obtain:
\begin{equation}
\mathrm{Cov}[\mathbf{n}, \mathbf{d}] = \mathrm{Var}[\mathbf{n}]
\end{equation}

\subsection{Modifying Residual Connections with $\tau$}\label{subsec:residual-modification}

To make skip connections variance-preserving, we use the \textit{fixed} residual modification scheme, as shown in Eq.~\ref{eq:res-mod-fixed}, with coefficients based on the hyperparameter $\tau$ \cite{unitScaling}. To understand the relationship of the optimal residual coefficient $\tau^*$ with network depth, we swept over various values of $\tau$ for models of different widths (256, 512, 1024, 2048) and depths (20, 40, 60, 80, 100). In order to assess potential confounding effects between $\tau^*$ and $\eta^*$ and $\lambda^*$, we tuned those two hyperparameters as well. We trained each model for 10.5B tokens with a global batch size of 256 and sequence length of 4096. We define the optimal subset of models as those which had a final cross-entropy loss within 0.25\% of the optimum (with loss averaged over the last 10 steps, i.e. 10.5M tokens). As shown in Fig.~\ref{optimal-res-coeff-depth}, $\tau^*$ (for the optimal subset of models) decreases as network depth increases. Since the contribution of each residual branch exponentially decays with depth, a lower $\tau$ ensures a lower rate of decay, likely useful as networks get deeper. This relationship between $\tau^*$ and depth is consistent even as model width increases. In our experiments, $\tau$ can be coarsely swept. We use the results shown in Fig.~\ref{optimal-res-coeff-depth}, to directly choose $\tau^*$ for all µS model training.

\begin{figure}[h]
\begin{center}
\includegraphics[width=0.5\textwidth]{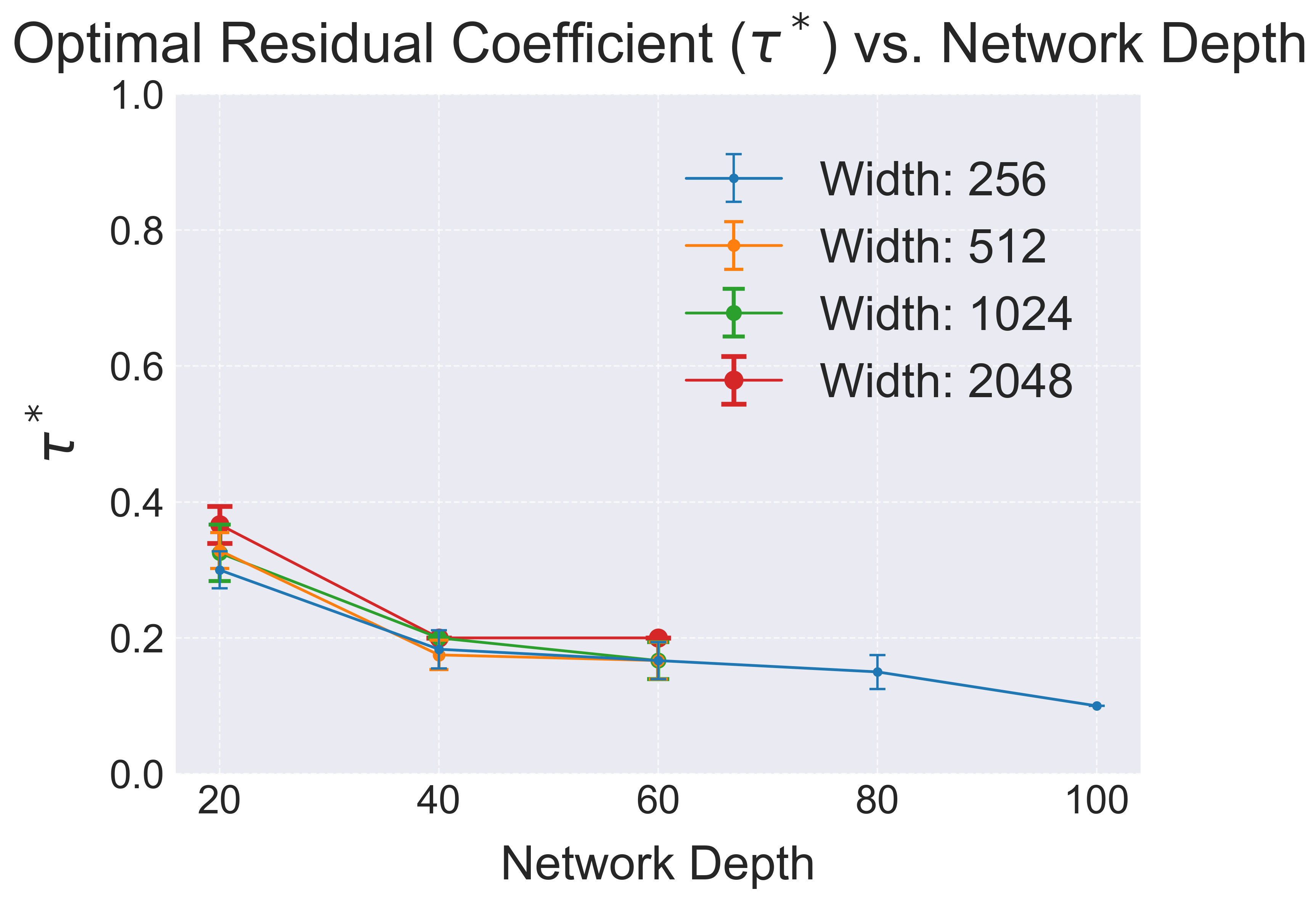}
\end{center}
\caption{\textbf{Optimal residual coefficient $\tau^*$ decreases with depth.} The 3 hyperparameters of $\tau$, $\eta$, and $\lambda$ are swept for models of varying widths (256, 512, 1024, 2048) and depths (20, 40, 60, 80, 100). The mean and standard error of $\tau$ is shown for the optimal subset of models from each hyperparameter sweep, where a model is included in the optimal subset if it had final cross-entropy loss within 0.25\% of the sweep optimum. $\tau^*$, which controls the decay rate of residual branch contributions in the residual stream, decreases as network depth increases.}
\label{optimal-res-coeff-depth}
\end{figure}

\subsection{Lion Optimizer and Hyperparameter Transfer}\label{subsec:lion}

Here, we show why Lion \citet{lion} is an "Adam-like" optimizer, so the µP rules for hyperparameter transfer with Adam \cite{adam} are applicable to Lion as well. Because Adam and Lion are both adaptive optimizers that normalize gradients coordinatewise before updating parameters, the nonlinear tensor product matrix results obtained in \citet[Appendix J.1.3]{tensorProgramsV} apply to both optimizers. One can see that Lion differs from Adam only in that it has a different second moment estimate. Under both optimizers, with gradient $g_t$, a parameter $\theta$ is updated as:

\begin{equation}
    \theta_{t+1} = \theta_t - \eta \frac{\beta_1 m_t + (1-\beta_1)g_t}{\sqrt{s_t}}
    \label{eq:adam-like-update}
\end{equation}

For Lion, this follows by expressing $\text{sign}(c_t)$ as $c_t / c_t^2$. Then, the second moment estimate $s_t$ for Adam (Eq.~\ref{eq:adam-second-moment}) and Lion (Eq.~\ref{eq:lion-second-moment}) are below.

\begin{equation}
    s_t^{\text{Adam}} = \beta_2 v_t + (1-\beta_2)g_t^2 + \epsilon
    \label{eq:adam-second-moment}
\end{equation}

\begin{equation}
    s_t^{\text{Lion}} = c_t^2 = \beta_1^2 m_t^2 + 2\beta_1(1-\beta_1)m_tg_t + (1-\beta_1)^2g_t^2
    \label{eq:lion-second-moment}
\end{equation}

This justifies why Lion is an Adam-like optimizer for the purposes of hyperparameter transfer. We use Lion for its reduced memory footprint in all our experiments.

\subsection{µnit Scaling vs Unit Scaling for larger model training}\label{subsec:convergence-scaling}

We test the unit scaling (US) and µnit scaling (µS) methods at the 7B model scale with FP8 training. Figure~\ref{ln-mod-runs-µS} shows that unit scaling models diverge very early in training, while µnit scaling runs converge smoothly. Based on this experiment, we did not conduct final model runs at different model scales with unit scaling (1B–13B).

\begin{figure}[htbp]
\centering
\includegraphics[width=0.6\textwidth]{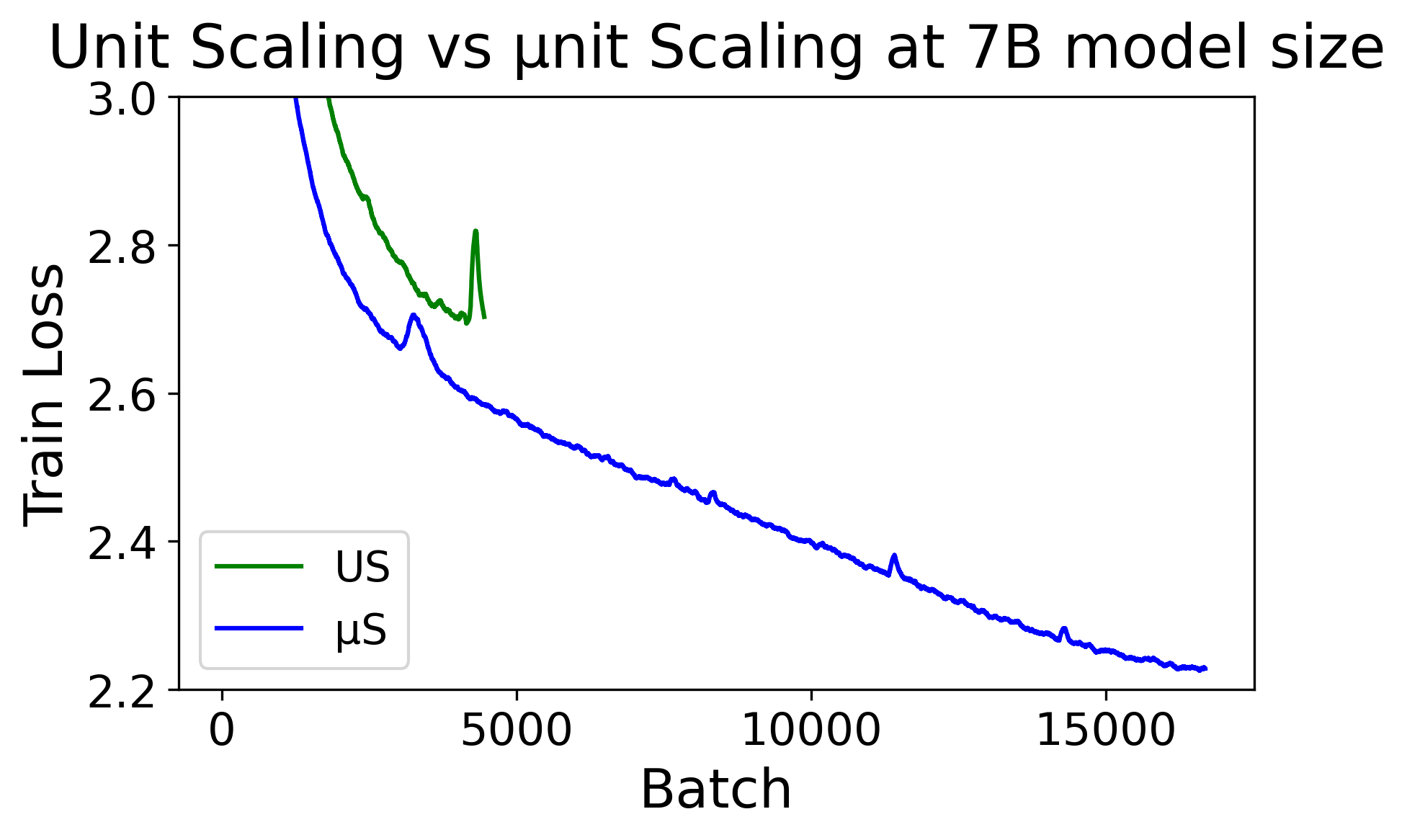}
\caption{\textbf{Unit Scaling (US) vs µnit Scaling (µS) for 7B models.} Convergence test loss curves at 7B model scale show that µS converges smoothly while US training diverges early in training.}
\label{ln-mod-runs-µS}
\end{figure}

\subsection{Activation Outliers}\label{subsec:activation-outliers}

We analyze activation distributions taken over 32,768 tokens at every 10 layers for all FP8 models trained according to Table~\ref{model-config-table}, with results shown in Fig.~\ref{activation-distributions}. These figures show the distribution of activation values for attention and FFN block inputs and outputs in the final 1B, 3B, 7B, and 13B FP8 models. While SP models consistently have outliers in the attention block and FFN block inputs at all model scales, µS models do not have these outliers in block inputs. This may make µS models more easily quantizable. It is important to note, however, that in SP models, the Pre-LayerNorm placement means that activations from the residual stream are first normalized before subsequent operations.

While we do not identify the exact mechanism by which these outliers arise in the residual stream in SP models, we show their absence in µS models here, with activation distributions that may be more conducive to quantization. An activation distribution with fewer outliers requires fewer bits to represent it.

\subsection{Activation Function Choice}\label{subsec:activation-functions}

The choice of activation function can have a significant impact on activation underflow when training in FP8. For example, recent work by \cite{fp8TrainingTrillionTokenLLMs} identifies outlier amplification from SwiGLU as a challenge for FP8 LLM training. Nearly all state-of-the-art LLMs today use either SiLU or GELU as their activation function, but when training in FP8, this may lead to underflow in activations during training. This is because these functions asymptotically approach zero as inputs $x \rightarrow -\infty$. We define the FP8 underflow fraction, or the fraction of elements flushed to 0 from a BF16 to FP8 cast, as a metric to evaluate various activation functions.
As shown in Fig.~\ref{act-fn-underflow-dist}, this can cause many activations to underflow.

\begin{figure}[h]
\begin{center}
\includegraphics[width=0.75\textwidth]{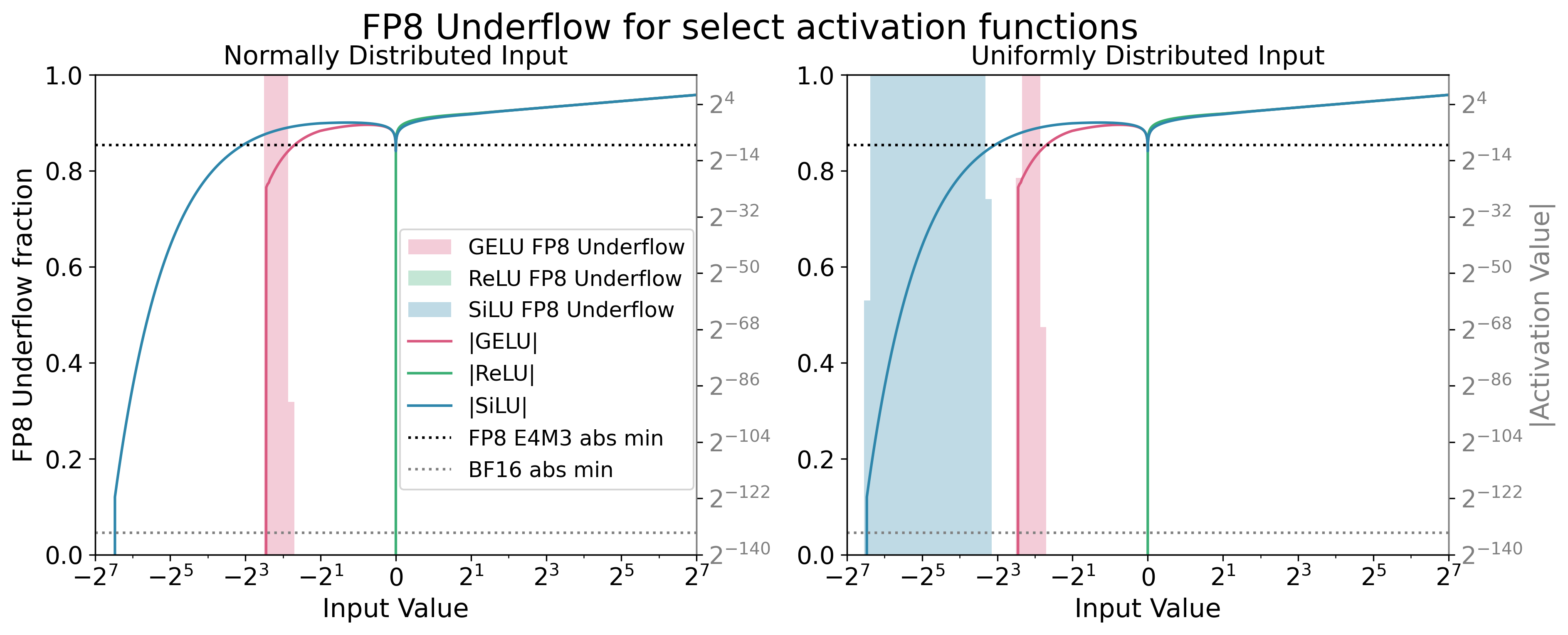}
\end{center}
\caption{\textbf{Different activation functions cause different amounts of FP8 underflow.} When casting $\mathcal{N}(0, 1)$ or $\text{Unif}(-128, 128)$ values from BF16 to FP8 (e4m3), GELU, SiLU, and ReLU (green) erroneously round to zero (underflow) with different probabilties. GELU and SiLU experience significant FP8 underflow because they slowly approach 0 for increasingly negative inputs. SiLU approaches 0 more slowly than GELU and so underflows for a wider range of inputs. ReLU simply maps all negative values to 0, regardless of the numerical format.}
\label{act-fn-underflow-dist}
\end{figure}

\begin{figure}[h]
\begin{center}
\includegraphics[width=0.7\textwidth]{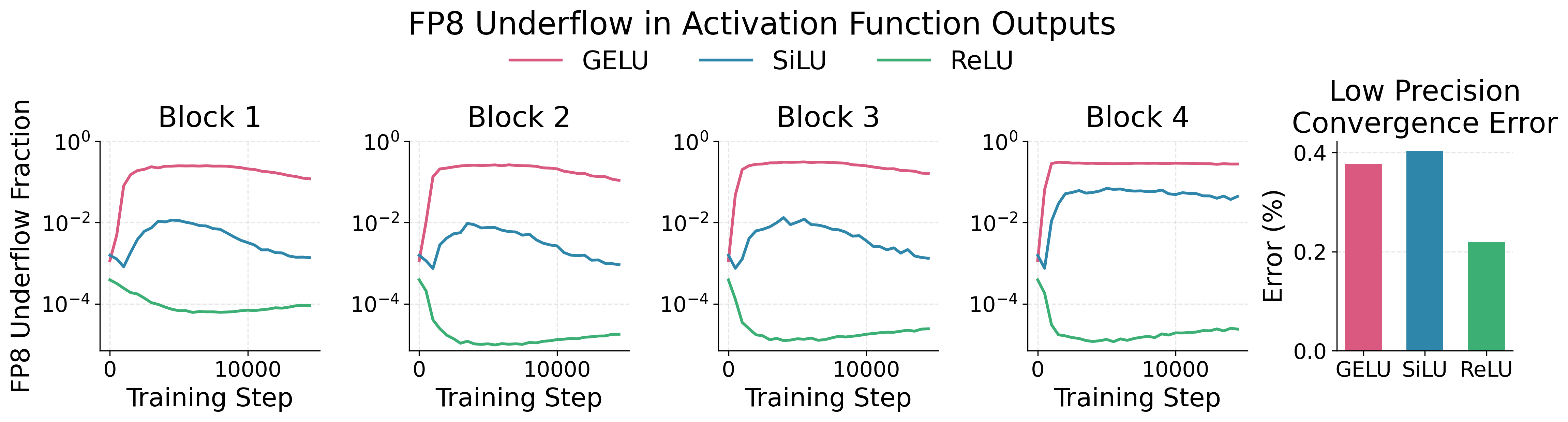}
\end{center}
\caption{\textbf{Activation function choice impacts FP8 underflow and low-precision convergence error.} FP8 underflow of activation function outputs for each block in a 4 layer transformer model during training is shown for GELU, SiLU, and ReLU. Low precision convergence error, defined as the percent difference in final cross entropy loss between an FP8 model and its BF16 counterpart, is shown in the rightmost chart. GELU and SiLU cause significant underflow over the course of training, and models trained with these activation functions have twice as much low precision convergence error as with ReLU. ReLU greatly reduces this FP8 underflow by multiple orders of magnitude.}
\label{training-act-fn-underflow}
\end{figure}

To better understand how activation function choice influences FP8 underflow when training with µnit scaling, we train small 4 layer models with GELU, SiLU, and ReLU. Our findings, detailed in Fig.~\ref{training-act-fn-underflow} that during unit scaled model training, the choice of activation function drastically impacts the FP8 underflow rate for activation outputs. GELU greatly degrades the representation of FFN down projection inputs, reaching up to 30\% underflow during training. SiLU causes similar degradation, but at a lower rate, reaching up to 7\% during training. In contrast, ReLU does not suffer from this problem, with a maximum of 0.04\% FP8 underflow during training. As a result, FP8 unit scaled models trained with ReLU have smaller low-precision convergence error (defined as the percent difference between the final cross entropy loss an FP8 model and its BF16 counterpart). Based on these observations and results, ReLU minimizes FP8 underflow and low-precision convergence error. ReLU also has the added benefit of sparsifying activations, a property which enables significant inference-time optimizations \cite{reluStrikesBack}. However, using GELU results in models with lower final training loss. For this reason, we use GELU when training all µS models. Additional investigations into activation functions more suitable for FP8 training can help mitigate underflow while also providing improved convergence.

\afterpage{
\begin{figure*}[p]
    \centering

    %%%%%%%%%%%%%%%
    \begin{subfigure}[t]{\textwidth}
    \centering
    \caption{1B SP FP8 model activation distributions.}
    \includegraphics[width=0.9\textwidth]{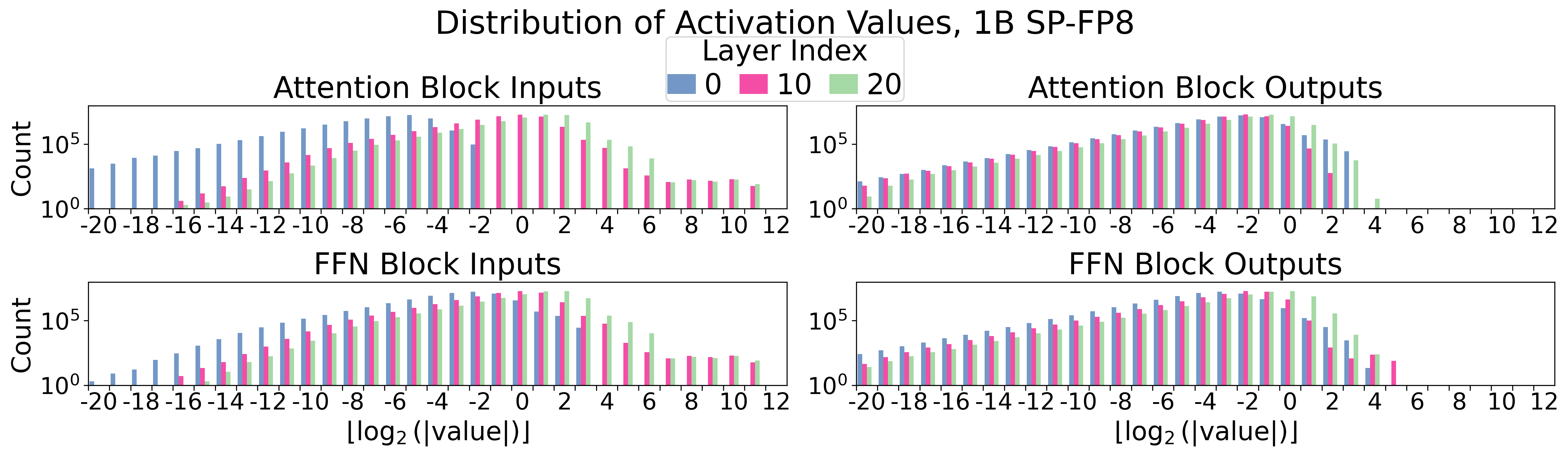}
    
    \label{fig:1b-basefp8-dist}
    \end{subfigure}
    %%%%%%%%%%%%%%%

    %%%%%%%%%%%%%%%
    \begin{subfigure}[t]{\textwidth}
    \centering
    \caption{1B µS FP8 model activation distributions.}
    \includegraphics[width=0.9\textwidth]{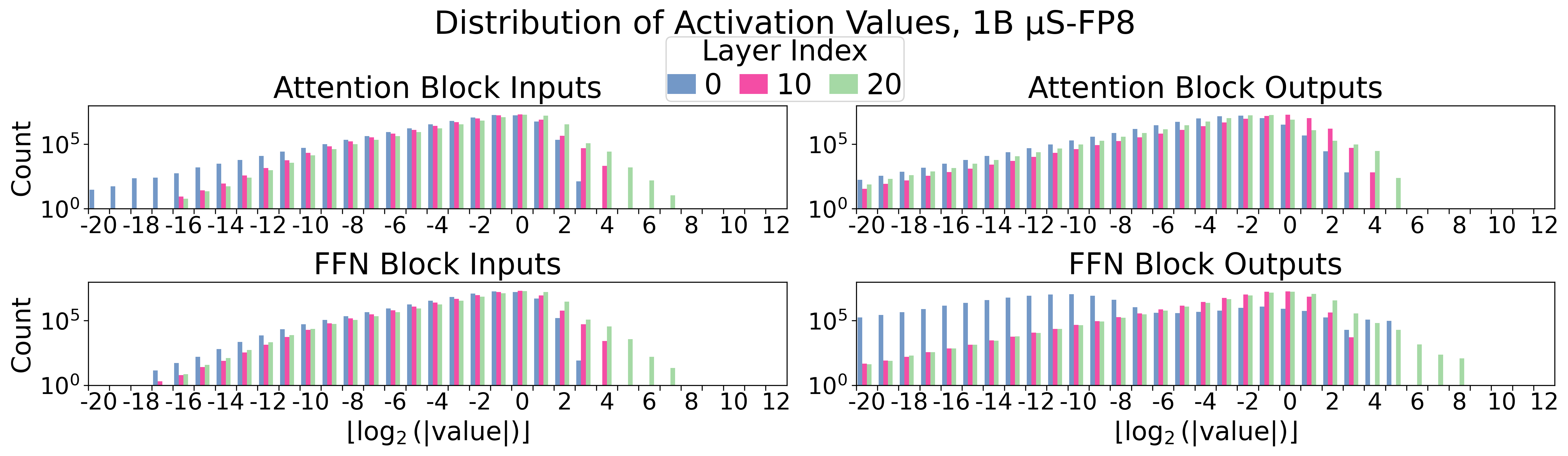}
    \label{fig:1b-musfp8-dist}
    \end{subfigure}
    %%%%%%%%%%%%%%%

    %%%%%%%%%%%%%%%
    \begin{subfigure}[t]{\textwidth}
    \centering
    \caption{3B SP FP8 model activation distributions.}
    \includegraphics[width=0.9\textwidth]{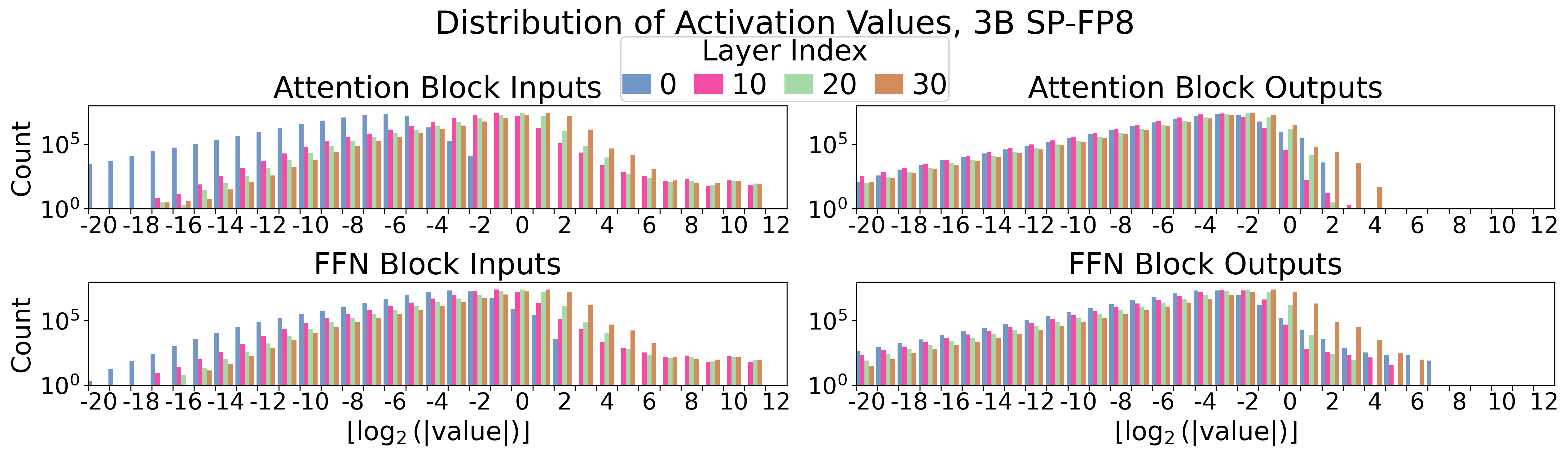}
    \label{fig:3b-basefp8-dist}
    \end{subfigure}
    %%%%%%%%%%%%%%%

    %%%%%%%%%%%%%%%
    \begin{subfigure}[t]{\textwidth}
    \centering
    \caption{3B µS FP8 model activation distributions.}
    \includegraphics[width=0.9\textwidth]{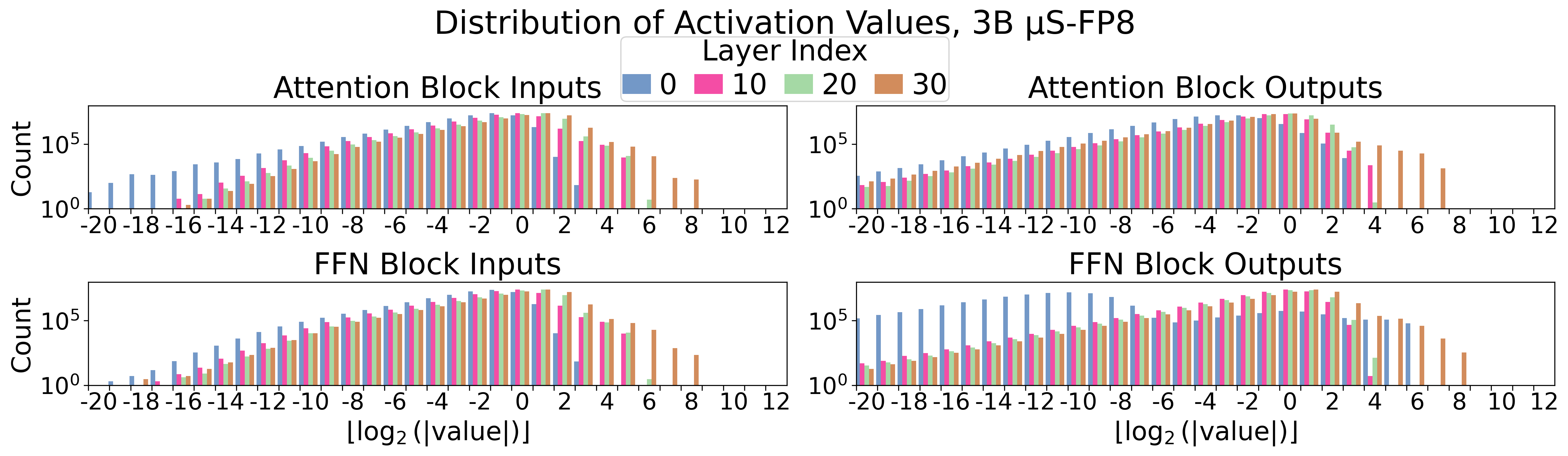}
    \label{fig:3b-musfp8-dist}
    \end{subfigure}
    %%%%%%%%%%%%%%%

    \end{figure*}
    \clearpage
}
    
\afterpage{
    \begin{figure*}[p]
    \ContinuedFloat
    \centering
    %%%%%%%%%%%%%%%
    \begin{subfigure}[t]{\textwidth}
    \centering
    \caption{7B SP FP8 model activation distributions.}
    \includegraphics[width=0.9\textwidth]{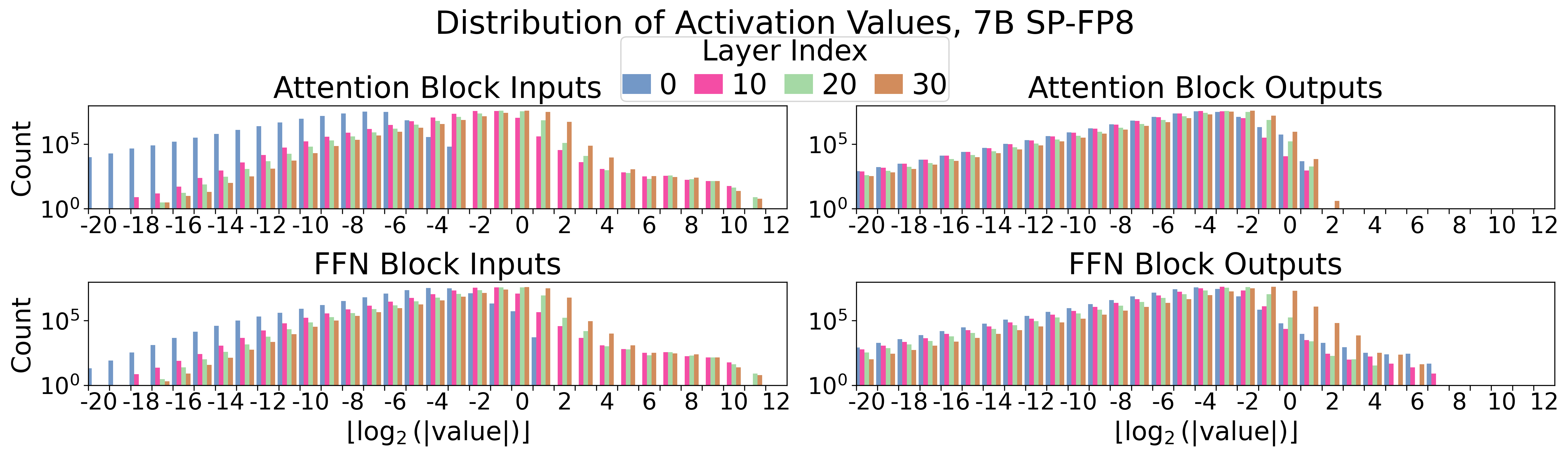}
    \label{fig:7b-basefp8-dist}
    \end{subfigure}
    %%%%%%%%%%%%%%%

    %%%%%%%%%%%%%%%
    \begin{subfigure}[t]{\textwidth}
    \centering
    \caption{7B µS FP8 model activation distributions.}
    \includegraphics[width=0.9\textwidth]{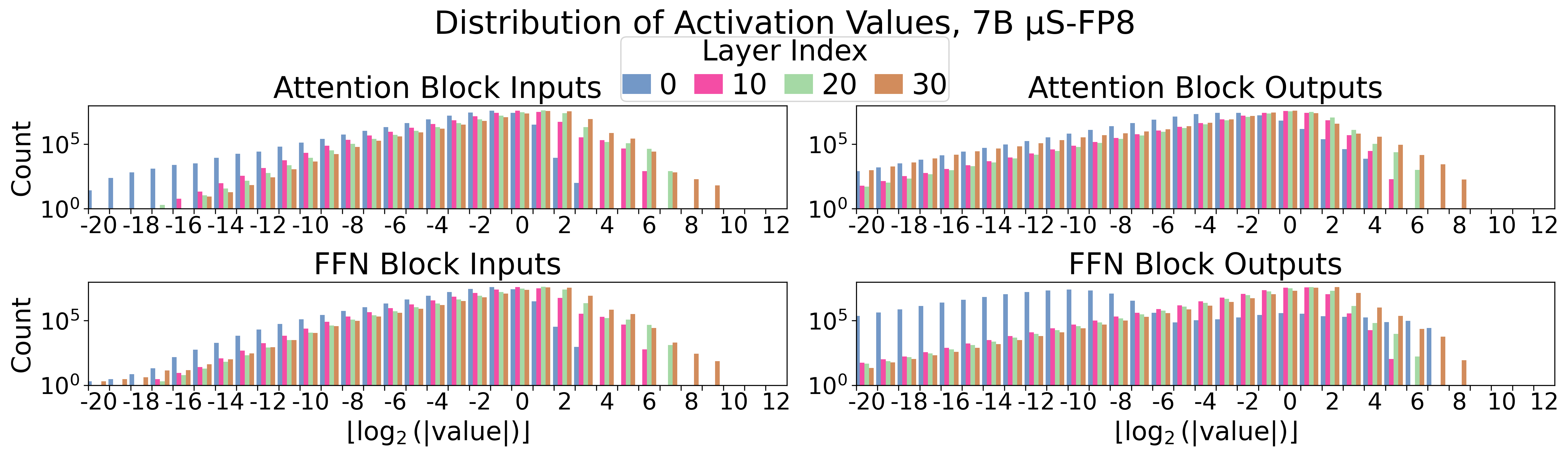}
    \label{fig:7b-musfp8-dist}
    \end{subfigure}
    %%%%%%%%%%%%%%%

    %%%%%%%%%%%%%%%
    \begin{subfigure}[t]{\textwidth}
    \centering
    \caption{13B SP FP8 model activation distributions.}
    \includegraphics[width=0.9\textwidth]{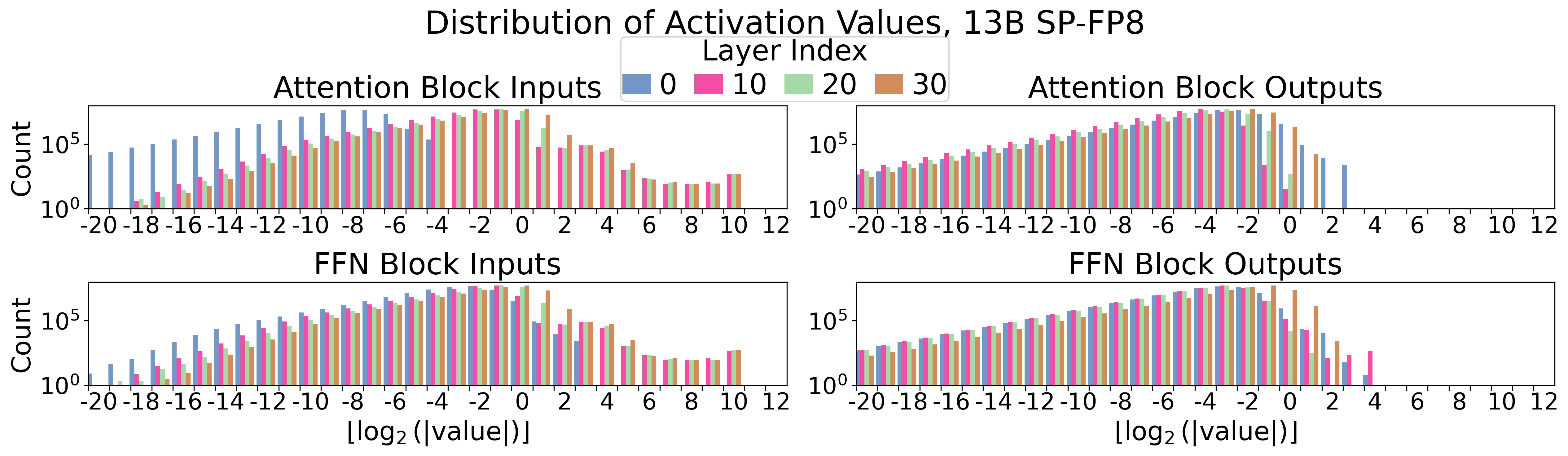}
    \label{fig:13b-basefp8-dist}
    \end{subfigure}
    %%%%%%%%%%%%%%%

    %%%%%%%%%%%%%%%
    \begin{subfigure}[t]{\textwidth}
    \centering
    \caption{13B µS FP8 model activation distributions.}
    \includegraphics[width=0.9\textwidth]{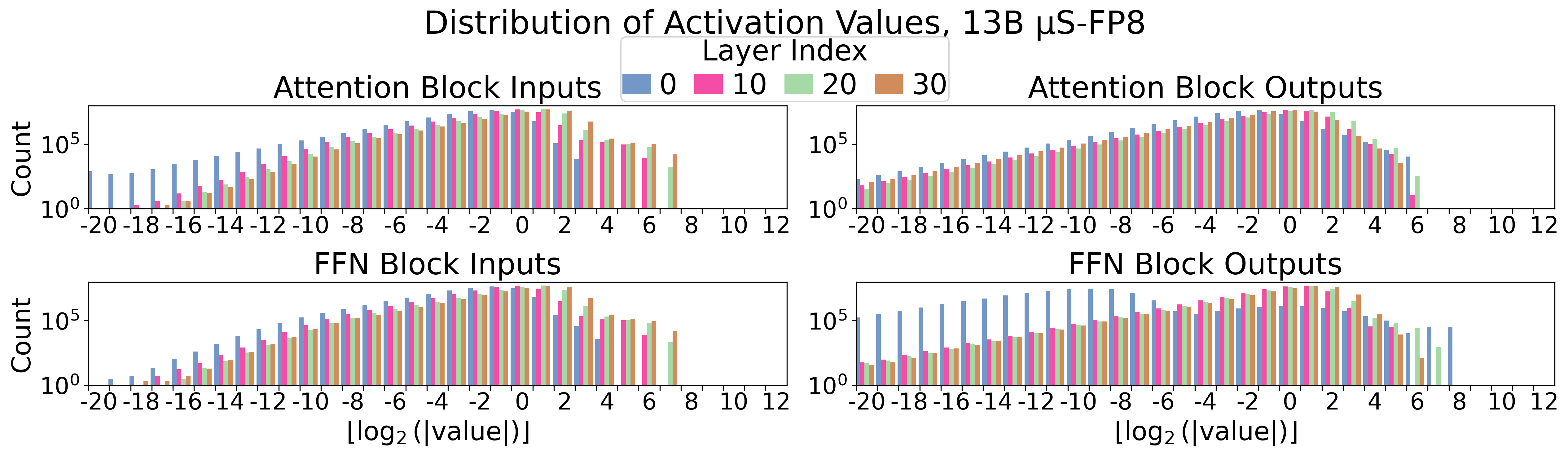}
    \label{fig:13b-musfp8-dist}
    \end{subfigure}
    %%%%%%%%%%%%%%%
    
    \caption{\textbf{Activation distributions of µS and SP models.} Activation distributions for attention and FFN block inputs and outputs are shown for 1B, 3B, 7B, and 13B FP8 models at every 10th layer. µS models lack the notable right tail of activation outliers in block inputs that SP models suffer from. This may make them easier to quantize.}
    \label{activation-distributions}

    \end{figure*}
    \clearpage
}

\end{document}